\newtheorem{definition}{Definition}
\newtheorem{proposition}{Proposition}
\newtheorem{claim}{Claim}
\acrodef{DL}{Deep Learning}
\acrodef{FL}{Federated Learning}
\acrodef{DNN}{Deep Neural Network}
\acrodef{GDPR}{General Data Protection Regulation}
\acrodef{DLG}{Deep Leakage from Gradients}
\acrodef{iDLG}{Improved \ac{DLG}}
\acrodef{IG}{Inverting Gradient}
\acrodef{GGL}{Generative Gradient Leakage}
\acrodef{PRECODE}{Privacy Enhancing Module}
\acrodef{CMA-ES}{Covariance Matrix Adaptation Evolution Strategy}
\acrodef{GRNN}{Generative Regression Neural Network}
\acrodef{GAN}{Generative Adversarial Network}
\acrodef{CNN}{Convolutional Neural Network}
\acrodef{DP}{Differential Privacy}
\acrodef{HE}{Homomorphic Encryption}
\acrodef{MPC}{Secure Multi-Party Computation}
\acrodef{MSE}{Mean Square Error}
\acrodef{PSNR}{Peak Signal-to-Noise Ratio}
\acrodef{LPIPS}{Learned Perceptual Image Patch Similarity}
\acrodef{SSIM}{Structure Similarity Index Measure}
\acrodef{ReLU}{Rectified Linear Unit}
\acrodef{CE}{Cross Entropy}
\acrodef{BN}{Batch Normalization}
\acrodef{FedAvg}{Federated Averaging}
\acrodef{SGD}{Stochastic Gradient Descent}
\begin{document}

\title{Gradient Leakage Defense with Key-Lock Module for Federated Learning}

\author{Hanchi~Ren,~
        Jingjing~Deng,~
        Xianghua~Xie*~\IEEEmembership{IEEE Senior Member},\\
\IEEEcompsocitemizethanks{\IEEEcompsocthanksitem Hanchi Ren and Xianghua Xie are with the Department of Computer Science, Swansea University, United Kingdom. Jingjing Deng is with the Department of Computer Science, Durham University, United Kingdom. 
\protect\\
E-mail:\{hanchi.ren, x.xie\}@swansea.ac.uk, and jingjing.deng@durham.ac.uk
}
}

\maketitle

\begin{abstract}
  Federated Learning (FL) is a widely adopted privacy-preserving machine learning approach where private data remains local, enabling secure computations and the exchange of local model gradients between local clients and third-party parameter servers. However, recent findings reveal that privacy may be compromised and sensitive information potentially recovered from shared gradients. In this study, we offer detailed analysis and a novel perspective on understanding the gradient leakage problem. These theoretical works lead to a new gradient leakage defense technique that secures arbitrary model architectures using a private key-lock module. Only the locked gradient is transmitted to the parameter server for global model aggregation. Our proposed learning method is resistant to gradient leakage attacks, and the key-lock module is designed and trained to ensure that, without the private information of the key-lock module: a) reconstructing private training data from the shared gradient is infeasible; and b) the global model's inference performance is significantly compromised. We discuss the theoretical underpinnings of why gradients can leak private information and provide theoretical proof of our method's effectiveness. We conducted extensive empirical evaluations with many models on several popular benchmarks, demonstrating the robustness of our proposed approach in both maintaining model performance and defending against gradient leakage attacks.
\end{abstract}

\begin{IEEEkeywords}
Federated Learning, Gradient Leakage, Gradient Leakage Defense
\end{IEEEkeywords}

\IEEEpeerreviewmaketitle

\section{Introduction}

\IEEEPARstart{D}{eep} Neural Networks are data-intensive, and growing public concerns about data privacy and personal information have garnered significant attention from researchers in recent years. \ac{FL}, a decentralized framework for collaborative privacy-preserving model training, was proposed to address the data privacy issue~\cite{mcmahan2016communication, konevcny2016federated, konevcny2016federatedlearning, mcmahan2017federated,lu2022personalized}. This framework involves multiple training clients and a central server responsible for aggregating locally computed model gradients into a global model for all clients to share. Sensitive data remains exclusively accessible to its respective owner. Nonetheless, recent studies have shown that gradient-sharing schemes do not adequately protect sensitive and private data. 

Regarding gradient leakage attack, it describes techniques that utilize gradients from a target model to uncover privacy-sensitive information. \ac{DL} models are trained on datasets by updating parameters to align with the feature space, creating a close relationship between the gradients and the dataset. Consequently, numerous studies focus on exploiting these gradients to reveal private information. Gradient leakage approaches have proven to be highly effective and successful. Notably, gradient leakage can occur even in models that have not yet converged~\cite{zhu2019deep}.

\begin{figure}[t!]
    \centering
    \includegraphics[width=0.99\linewidth]{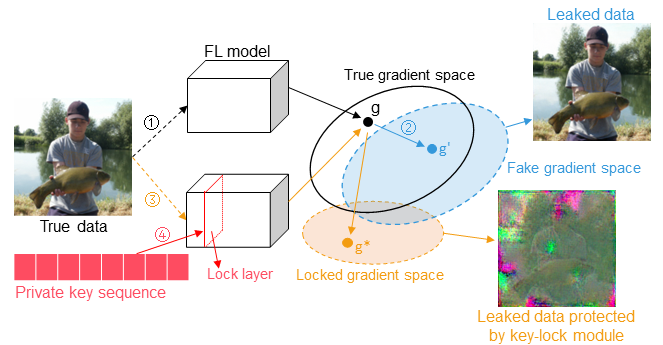}
    \caption{Illustration of FedKL against gradient leakage: 1. Clients feed private training data into the model and generate the true gradient; 2. The attacker reconstructs private images from the shared gradient; 3. Clients feed private training data into the model with key-lock module. The attacker can not reveal true images from the locked gradient; 4. A private key sequence is fed into the lock layer for proper inference progression.}
    \label{fig:illustration}
\end{figure}

Recently, a number of strategies have been proposed in order to mitigate or overcome leakage attacks, including gradient perturbation~\cite{zhu2019deep, yang2020accuracy, sun2020ldp, sun2021soteria, ren2022grnn, fan2024guardian, zhu2024vulnerabilities}, data obfuscation or sanitization~\cite{hasan2016effective, chamikara2018efficient, chamikara2020efficient, lee2021digestive, chamikara2021privacy, scheliga2022precode, wu2024concealing}, and hybrid methods~\cite{bu2020deep, ren2020fedboost, li2020privacy, yadav2020differential, wei2021gradient}. Nonetheless, in the majority cases, a balance must be struck between privacy preservation and performance optimization. Specifically, for conventional encryption-based methodologies, the elevated computational complexity intrinsic to cryptographic operations demands substantial computational resources. An in-depth examination of gradient leakage ablation experiments was conducted by Wei \emph{et al.}~\cite{wei2020framework}, in which the study quantitatively delineated the relationships between \ac{DNN} architectural design and \ac{FL} settings, encompassing factors such as batch size, image resolution, activation function, and the quantity of local iterations preceding gradient exchange. These findings are either directly or indirectly corroborated by several other studies. For instance, \ac{DLG} proposes that the activation function must be twice-differentiable. The research also assesses the impact of leakage performance by altering the number of local training iterations on the client side. In~\cite{fan2021deepip}, the authors introduced a neural network ownership verification technique, DeepIP, a concept referred to as a ``digital passport", which serves as input for the ``passport function" responsible for generating the scale and shift parameters in the normalization layer. A key finding of this study is that the model's inference performance will experience significant degradation in the presence of forged passports. In our proposed work, we show that transitioning the scale and shift parameters from trainable parameters to generated parameters has the potential to disrupt the information inheritance process during backward propagation.

In this study, we first establish, from a theoretical standpoint, that the feature maps computed from the fully-connected layer, convolutional layer, and \ac{BN} layer encompass the private information of input data. This information also coexists in the gradient during the backward propagation. Moreover, we propose that gradient leakage attacks are only viable when the gradient spaces between the global model and local models are adequately aligned. Consequently, we introduce FedKL, a key-lock module adept at differentiating, misaligning, and securing the gradient spaces with a private key while maintaining federated aggregation analogous to the standard \ac{FL} framework. In essence, we reconfigure the scale and shift procedures in the normalization layer. A private key, \emph{i.e.}, a randomly generated sequence, is input into two fully-connected layers, and the resultant outputs serve as the proprietary coefficients for the scale and shift processes. Both theoretical analyses and experimental outcomes demonstrate that the proposed key-lock module is effective and efficient in defending against gradient leakage attacks, as the consistency of private information in the gradient is obscured. Consequently, a malicious attacker is unable to formulate forward-backward propagation without the private key and the gradient of the lock layer. Thus, it becomes infeasible to reconstruct local training data by approximating the shared gradient in the \ac{FL} system. Our theoretical and experimental findings indicate that FedKL offers the following advantages:
\begin{itemize}
    \item robust protection against gradient leakage attacks;
    \item negligible decline in inference performance compared to the model without the key-lock module;
    \item minimal additional computational cost associated with the key-lock module;
    \item significant reduction in inference performance when the private key of the key-lock module is absent;
    \item ability to adapt to arbitrary network architecture.
\end{itemize}

The remainder of this paper is structured as follows: Section~\ref{sec:rw} encompasses a discussion of relevant literature on gradient leakage attacks and associated defensive strategies; Section~\ref{sec:pm} introduces the proposed gradient leakage defense approach and private key-lock module, succeeded by an experimental evaluation and comparative analysis in Section~\ref{sec:er}; Finally, Section~\ref{sec:cc} offers concluding remarks and explores potential avenues for future research.

\section{Related Work}
\label{sec:rw}

Gradient leakage attacks seek to disclose private training data based on leaked gradients and represent a significant privacy-preserving threat to both centralized and collaborative \ac{DL} systems. In centralized \ac{DL} systems, membership inference is a common method for exposing private training data~\cite{shokri2017membership, truex2018towards, truex2019demystifying, choquette2021label, zhang2022label}. Zhu \emph{et al.}~\cite{zhu2019deep} proposed the \ac{DLG} method for recovering original training data by iteratively updating a randomly initialized input image and its associated label to approximate the true gradient. The updated input image is expected to converge towards the true image when the training is complete. However, \ac{DLG} has been criticized for its instability and sensitivity to batch size and image resolution. It was one of the first studies to investigate private training data reconstruction from leaked gradients in a collaborative \ac{DL} system. Zhao \emph{et al.}~\cite{zhao2020idlg} demonstrated that the ground-truth label can be computed analytically from the gradient of the loss with respect to the \emph{Softmax} layer and proposed \ac{iDLG}. Following \ac{DLG}, Geiping \emph{et al.}~\cite{geiping2020inverting} explored an alternative approach, termed \ac{IG}, to unveiling private training data using an optimization-based method. They enhanced the stability of \ac{DLG} by introducing a magnitude-invariant cosine similarity measurement for the loss function and demonstrated that prior knowledge facilitates a more efficient gradient leakage attack. \ac{IG} can recover images up to $224*224$ resolution and a batch size of $100$ with acceptable success rates. The following work from Wen \emph{et al.}~\cite{wen2022fishing} proposed a data fishing method, relying on optimization-based methods and gradient magnification strategies. It primarily focuses on recovering images from large batch sizes by magnifying the gradient contributions of specific data points or classes. However, data fishing requires multiple rounds of FL training for specific gradient and class magnification and can only recover one class of images at a time. The image reconstruction process of data fishing is similar to \ac{IG}. Similarly, Jeon \emph{et al.}~\cite{jeon2021gradient} posited that gradient information alone is insufficient for revealing private training data. As a result, they proposed GIAS, which employs a pre-trained data-revealing model. Yin \emph{et al.}~\cite{yin2021see} reported that in an image classification task, the ground-truth label could be easily discerned from the gradient in the last fully-connected layer, and \ac{BN} statistics can significantly augment the gradient leakage attack, ultimately revealing high-resolution private training images.

Generative model-based methods can also facilitate gradient leakage attacks. Hitaj \emph{et al.}~\cite{hitaj2017deep} proposed a \ac{GAN}-based data recovery method capable of generating a dataset with a distribution closely resembling that of the training dataset. Wang \emph{et al.}~\cite{wang2019beyond} trained a \ac{GAN} with a multitask discriminator for auxiliary identification, referred to as mGAN-AI, to generate private information based on the gradient. In our previous work~\cite{ren2022grnn}, we introduced a new method for recovering training data using a \ac{GRNN}. \ac{GRNN} consists of two generative model branches, one based on a \ac{GAN} for generating fake training data and the other using a fully-connected architecture for generating corresponding labels. The training data is recovered by regressing the true and fake gradients, computed from the generated data pairs. The \ac{GRNN} was developed to reconstruct private training data and its corresponding label, handling large batch sizes and high-resolution images. In the \ac{GGL}~\cite{li2022auditing} study, a \ac{GAN} was employed to create fake data. The weights of the \ac{GAN} were pre-trained and fixed, with the trainable parameters in \ac{GGL} being the input sequence of the \ac{GAN}. The label inference component was derived from \ac{iDLG}, requiring a batch size of 1. In contrast to other approaches, \ac{GGL} utilized \ac{CMA-ES} as the optimizer to reduce variation in the generated data. However, \ac{GGL} only generates similar data given a label, and it does not aim to recover the original training data.

Many attempts have been dedicated to protecting private information leakage from gradients. A significant proportion of these, including gradient perturbation, data obfuscation or sanitization, \ac{DP}, \ac{HE}, and \ac{MPC}~\cite{li2020privacy}, attempt to safeguard private training data or publicly shared gradients transmitted between clients and servers. Zhu \emph{et al.}~\cite{zhu2019deep} experimented with adding Gaussian and Laplacian noise, reporting that only the magnitude of distribution variance matters, not the noise type. Leakage attacks fail when the variance exceeds $10^{-2}$, but at this level of variance, the global model performance dramatically declines. Chamikara \emph{et al.}~\cite{chamikara2021privacy} proposed a data perturbation method that preserves training data privacy without compromising model performance. The input dataset is treated as a data matrix and subjected to a multidimensional transformation in a new feature space. They perturbed the input data with various transformation scales to ensure adequate perturbation. However, the method relies on a centralized server controlling the generation of global perturbation parameters, and crucially, the perturbation method may corrupt the architectural information of image-based data. Wei \emph{et al.}~\cite{wei2021gradient} employed \ac{DP} to add noise to each client's training dataset and introduced a per-example-based \ac{DP} approach for the client, called \emph{Fed-CDP}. To improve inference performance and defend against gradient leakage, a dynamic decay noise injection method is proposed. However, experimental results reveal that although the method prevents training data reconstruction from the gradient, inference accuracy significantly deteriorates. Moreover, the computational cost is high since \ac{DP} is performed on each individual training sample. Both \ac{PRECODE}~\cite{scheliga2022precode} and our proposed FedKL attempt to prevent input information from passing through the model when computing the gradient. \ac{PRECODE} uses variational modeling to introduce stochastic sampling in the latent feature space of neural networks. By making latent features stochastic, the gradient information sent during collaborative training does not directly correlate to the input data. \ac{PRECODE} integrates into two fully-connected layers as additional layers before the final classification layer, ensuring that the output features for prediction are derived from the stochastic latent representation. While our method, FedKL, and \ac{PRECODE} share the goal of blocking private information transmission and disrupting the optimization process of gradient inversion attacks, there are significant differences between the two approaches. FedKL specifically targets the \ac{BN} layer, maintaining normal optimization in other layers. Since the parameters of the \ac{BN} layer are only related to data distribution, our method has a smaller performance impact on the model. In contrast, \ac{PRECODE} introduces stochastic noise into the latent feature space, which can affect model performance and convergence. By focusing on the \ac{BN} layer, FedKL offers a novel and efficient way to protect privacy in \ac{FL} systems with minimal impact on model performance. This distinction sets our work apart from existing methods like \ac{PRECODE} and highlights the unique contributions of our approach.

\section{Proposed Method}
\label{sec:pm}

In this section, we first demonstrate that the gradient information of a \ac{DNN} during the back-propagation stage is strongly correlated with the input data and its corresponding label in a supervised learning context. Subsequently, we theoretically illustrate that by shifting the latent space, the private data cannot be recovered from the gradient without aligning the latent spaces. Based on these two theoretical discoveries, the key-lock module is proposed to safeguard the local gradient from leakage attacks. Finally, we present the novel \ac{FL} framework with the key-lock module, referred to as FedKL, which illustrates how the proposed method protects the collaborative learning system from gradient leakage attacks.

\subsection{Theoretical Analysis on Gradient Leakage}
\label{sec:TAGL}
Recently, several studies have demonstrated that the input data information embedded in the gradient enables leakage attacks. Geiping \emph{et al.}~\cite{geiping2020inverting} established that for any fully-connected layer in a neural network, the derivative of the loss value concerning the layer's output contains input data information. By employing the \emph{Chain Rule}, the input of the fully-connected layer can be computed independently from the gradients of other layers. In this section, using typical supervised learning tasks, \emph{e.g.} classification and regression, we extend this theory to the convolutional layer and \ac{BN} layer.

\begin{definition}
    \textbf{Gradient:} suppose in a vector or matrix space, a function $f: \mathbb{R}^{n} \rightarrow \mathbb{R}^{m}$ that maps a vector of length $n$ to a vector with length $m$: $f(x) = y,\ x \in \mathbb{R}^{n};\ y \in \mathbb{R}^{m}$. The gradient is the derivative of a function $f$ with respect to input $x$, and can be presented by the Jacobian Matrix:
    \begin{gather}
    \label{alg:jacobian-f}
    \frac{\partial f}{\partial x} = 
        \begin{pmatrix}
            \frac{\partial f_1}{\partial x_1} & \cdots & \frac{\partial f_1}{\partial x_n} \\
            \vdots & \ddots & \vdots \\
            \frac{\partial f_m}{\partial x_1} & \cdots & \frac{\partial f_m}{\partial x_n}
        \end{pmatrix}
    \end{gather}
\end{definition}

\begin{definition}
    \textbf{Chain Rule:} considering the derivative of a function of a function, \emph{i.e.}, $g(f(x))$. Assuming that the function is continuous and differentiable, the outer function is differentiable with respect to the inner function as an independent variable. Subsequently, the inner function is differentiable with respect to its independent variable $x$. According to the \emph{Chain Rule}, the derivative of $x$ with respect to $x$ is:
    \begin{gather}
    \label{alg:chain-rule}
        \frac{\partial g}{\partial x} = \frac{\partial g}{\partial f} \frac{\partial f}{\partial x}
    \end{gather}
\end{definition}

\begin{claim}
\label{claim:1}
    The gradient in a linear neural network demonstrates a strong correlation with both the input data and their associated ground-truth labels.
\end{claim}
\begin{proof}
    Assuming a simple linear regression example, the model is $f(x): \hat{y} = \theta x$, the loss function is \ac{MSE}: $\mathcal{L}(x, y) = \frac{1}{N}\sum (\hat{y} - y)^2 = \frac{1}{N}\sum (\theta x - y)^2$, where $N$ is the number of sample, $y$ is the ground-truth. The derivative of loss function $\mathcal{L}$ with respect to the weight $\theta$ on a batch of $N$ samples is:
    \begin{align}
    \label{equ:gradient_loss_linear_function}
        \frac{\partial \mathcal{L}}{\partial \theta} &= \frac{\partial}{\partial \theta} \frac{1}{N} \sum_i(x_i \cdot \theta - y_i)^2\\
        &= \frac{1}{N} \sum_i\frac{\partial}{\partial \theta}(x_i \cdot \theta - y_i)^2 \nonumber\\
        &= \frac{1}{N} \sum_i 2\cdot(x_i \cdot \theta - y_i) \cdot \frac{\partial (x_i \cdot \theta - y_i)}{\partial \theta} \nonumber\\
        &= \frac{1}{N} \sum_i 2\cdot x_i \cdot (\hat{y} - y_i) \nonumber
    \end{align}
    In this case, the gradient of the model is positively related to the input to the function (\emph{i.e.}, $x$) and the difference between the predicted label ($\hat{y}$) and the ground-truth label ($y$). This relationship implies that the gradient carries information about both the input data and the labels, which can be exploited in a gradient leakage attack.
\end{proof}

\begin{claim}
\label{claim:2}
    In the framework of a non-linear neural network, the gradient exhibits a significant correlation with both the input data and the corresponding true label.
\end{claim}
\begin{proof}
    For a classification task, let's consider a model consisting of two fully-connected layers, a \ac{ReLU} activation layer, a \emph{Softmax} layer, and a \ac{CE} loss function. The forward propagation process of the model can be described as follows:
    \begin{align}
        x &= input \nonumber\\
        u &= \theta \cdot x + b_1 \nonumber\\
        a &= ReLU(u) \nonumber\\
        z &= \lambda \cdot a + b_2 \nonumber\\
        \hat{y} &= softmax(z) \nonumber\\
        \mathcal{L} &= CE(y, \hat{y}) \nonumber
    \end{align}
    where $x \in \mathbb{R}^{D_x\times1}$, $\theta \in \mathbb{R}^{D_o \times D_x}$, $b_1 \in \mathbb{R}^{D_o \times 1}$, $\lambda \in \mathbb{R}^{N_c \times D_o}$, $b_2 \in \mathbb{R}^{N_c \times 1}$, $D_x$ is the dimension of the input space, $D_o$ is the number of hidden nodes in the first fully-connected layer, and $N_c$ is the number of classes. The \ac{ReLU} activation function is widely utilized in \ac{DL}. The derivative of the \ac{ReLU} layer with respect to its input can be expressed as:
    \begin{align}
    \frac{\partial ReLU(x)}{\partial x}&= \left\{\begin{array}{rcl} 
                                                1 & \mbox{if} & x>0 \\
                                                0 & \mbox{if} & otherwise
                                           \end{array}\right. = sgn(ReLU(x)) 
    \end{align}
    The derivative of loss function $\mathcal{L}$ with respect to the input of \emph{Softmax} $z$ is $(\hat{y}-y)^T$. Then the gradient of the output layer is:
    \begin{align}
        \frac{\partial \mathcal{L}}{\partial \lambda} &= \frac{\partial \mathcal{L}}{\partial z} \frac{\partial z}{\partial \lambda} = (\hat{y}-y)^T \cdot a \label{equ:gradient_non_neg}\\
        \nonumber \\
        \frac{\partial \mathcal{L}}{\partial b_2} &= \frac{\partial \mathcal{L}}{\partial z} \frac{\partial z}{\partial b_2} = (\hat{y}-y)^T \label{equ:gradient_non_neg_bias}
    \end{align}
    This substantiates that the gradient of the output layer encompasses information about the ground-truth label, which aligns with the findings in papers~\cite{zhao2020idlg,yin2021see}. In Eqn.~\ref{equ:gradient_non_neg}, $\hat{y}$ represents the predicted probability distribution and $y$ denotes the one-hot distribution of the ground-truth label. Consequently, only the correct label yields a negative result for $(\hat{y} - y)^T$. Furthermore, $a$, feature map as the output of the activation layer, passes the information of input data to the gradient of the output layer. As the value of $\theta$ and $b_1$ are known in typical \ac{FL} system, the input data can be calculated based on Eqns.~\ref{equ:gradient_non_neg}, \ref{equ:gradient_non_neg_bias} and \ref{equ:a_x}:
    \begin{align}
    a = \left\{\begin{array}{ccl}
                \theta \cdot x + b_1 & \mbox{if} & \theta \cdot x + b_1>0 \\
                0 & \mbox{if} & otherwise
                \end{array}\right. \label{equ:a_x}
    \end{align}
    Next, the gradient for the first fully-connected layer:
    \begin{align}
        \frac{\partial \mathcal{L}}{\partial \theta} &= \frac{\partial \mathcal{L}}{\partial z} \frac{\partial z}{\partial a} \frac{\partial a}{\partial u} \frac{\partial u}{\partial \theta} \label{equ:gradient_loss_fc_weight}\\
            &= (\hat{y}-y)^T \cdot \lambda \circ sgn(a) \cdot x \nonumber\\
            &= \left\{\begin{array}{ccl}
                (\hat{y}-y)^T \cdot \lambda \cdot x & \mbox{if} & a>0 \nonumber\\
                0 & \mbox{if} & otherwise
                \end{array}\right. \nonumber\\
        \nonumber\\
        \frac{\partial \mathcal{L}}{\partial b_1} &= \frac{\partial \mathcal{L}}{\partial z} \frac{\partial z}{\partial a} \frac{\partial a}{\partial u} \frac{\partial u}{\partial b_1} \label{equ:gradient_loss_fc_bias}\\
            &= (\hat{y}-y)^T \cdot \lambda \circ sgn(a) \cdot 1 \nonumber\\
            &= \left\{\begin{array}{ccl}
                (\hat{y}-y)^T \cdot \lambda & \mbox{if} & a>0 \nonumber\\
                0 & \mbox{if} & otherwise
                \end{array}\right. \nonumber
    \end{align}
    The weight $\lambda = \lambda^{\prime}-\eta (\hat{y}-y)^T a$, where $\lambda^{\prime}$ represents the weight from the previous iteration and $\eta$ denotes the learning rate. In the \ac{ReLU} layer, specific information is suppressed (set to 0), as seen in Eqn.~\ref{equ:gradient_loss_fc_weight}. Notably, the non-zero gradients comprise three components: 1) the difference between the predicted probability distribution and the one-hot ground-truth label; 2) the weights of the subsequent fully-connected layer $\lambda$; and 3) the input of the current layer, which includes input data and feature maps. Given the gradient of each layer, every element in Eqns.\ref{equ:gradient_loss_fc_weight} and \ref{equ:gradient_loss_fc_bias} is known; thus, the input data $x$ can be reconstructed. It can be concluded that the gradient in a non-linear neural network exhibits a strong correlation with the input data and ground-truth label. In accordance with the \emph{Chain Rule}, the input information is also transmitted throughout the entire network.    
\end{proof}

\begin{claim}
\label{claim:3}
    The gradient in a \ac{CNN} exhibits a strong correlation with both the input data and the ground-truth label. In the convolutional layers, input information and features are propagated through the network, influencing the gradients at each layer. 
\end{claim}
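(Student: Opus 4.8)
The plan is to reduce the convolutional case to the non-linear fully-connected analysis already completed in Claim~\ref{claim:2}, exploiting the fact that convolution is a linear operation with shared weights. First I would write the forward pass of a convolutional layer as $z = W * x + b$, where $*$ denotes the (cross-)correlation between the kernel $W$ and the input feature map $x$, followed by a \ac{ReLU} activation, a \emph{Softmax}, and a \ac{CE} loss, exactly mirroring the architecture of Claim~\ref{claim:2}. The essential observation is that each entry of the output $z$ is an inner product between the flattened kernel and a local patch of $x$, so no genuinely new algebra is required beyond accounting for the convolutional structure.

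Next I would unfold the convolution into a matrix multiplication via the \emph{im2col} operation: collecting every input patch into the columns of a matrix $\tilde{x}$ and reshaping the kernel into $\theta$, the layer becomes $z = \theta \tilde{x} + b$. This is structurally identical to the linear map $u = \theta x + b_1$ of Claim~\ref{claim:2}, so the same back-propagation identities carry over. Applying the \emph{Chain Rule} (Eqn.~\ref{alg:chain-rule}), the gradient with respect to the kernel factorises as $\frac{\partial \mathcal{L}}{\partial W} = \frac{\partial \mathcal{L}}{\partial z}\frac{\partial z}{\partial W}$; because $z$ depends linearly on $W$ through the input patches, $\frac{\partial z}{\partial W}$ is exactly the input data, so the kernel gradient is itself a correlation between the back-propagated error $\frac{\partial \mathcal{L}}{\partial z}$ — which carries the label information $(\hat{y}-y)$ as in Eqn.~\ref{equ:gradient_non_neg} — and the input feature map. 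Hence the convolutional gradient has the same product structure, a label term multiplied by an input term, as Eqn.~\ref{equ:gradient_loss_fc_weight}, establishing the claimed correlation.

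I expect the main obstacle to be the weight-sharing that distinguishes convolution from a dense layer. Each kernel weight is reused across all spatial positions, so the gradient with respect to a single weight is a \emph{sum} over all output locations of an input-patch entry times the upstream error, making the map from input to kernel-gradient many-to-one. I would argue that this aggregation does not weaken the statement: the claim asserts a strong correlation and the propagation of input information, not element-wise invertibility. The summed gradient remains an explicit function of the input patches and of the label through the error term, and by the \emph{Chain Rule} this dependence is passed backward layer-by-layer through the entire \ac{CNN}, exactly as argued for the generic non-linear network in Claim~\ref{claim:2}. To make the propagation concrete I would additionally note that the gradient flowing back to the layer input is a transposed convolution of the upstream gradient $\frac{\partial \mathcal{L}}{\partial z}$ with the kernel, which keeps the input-and-label coupling intact at every depth, thereby closing the argument.
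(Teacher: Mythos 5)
Your proposal is correct and rests on the same underlying mechanism as the paper's proof: both arguments mirror the architecture of Claim~\ref{claim:2} with a convolutional front end, inherit the label dependence $(\hat{y}-y)$ in the output-layer gradient from Eqn.~\ref{equ:gradient_non_neg}, and then show the kernel gradient is a product of the back-propagated error and the input data. The difference is in execution. The paper works at the index level: it writes $u_{(i,j,d)} = \sum_h \sum_w \sum_c \theta_{(d,h,w,c)} \cdot x_{(i+h, j+w, c)} + b_{1d}$, differentiates one output pixel with respect to one kernel parameter to get $\partial u_{(i,j,d)}/\partial \theta_{(d,h',w',c')} = x_{(i+h',j+w',c')}$, and composes with $\partial\mathcal{L}/\partial u$ (known from Eqns.~\ref{equ:gradient_loss_fc_weight} and \ref{equ:gradient_loss_fc_bias}) to obtain Eqn.~\ref{equ:gradient_loss_theta_prime}. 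You instead unroll the convolution via \emph{im2col} into a single matrix multiplication and import Claim~\ref{claim:2}'s identities wholesale, which is a cleaner packaging of the same fact that convolution is linear in the kernel, so the derivative of the output with respect to the weights \emph{is} the input. One point where your write-up is sharper than the paper's: you make the weight-sharing summation explicit. The paper's Eqn.~\ref{equ:gradient_loss_theta_prime} displays only the contribution of a single output location $(i,j)$ — the sum over locations appears only in its bias gradient $\partial\mathcal{L}/\partial b_{1d}$ — whereas the true kernel gradient aggregates over all $(i,j)$. Your observation that this many-to-one aggregation does not defeat the claim, because the claim asserts correlation and propagation of input information rather than element-wise invertibility, is exactly the right justification, and your transposed-convolution remark about the gradient reaching the layer input makes concrete what the paper leaves implicit in its appeal to the \emph{Chain Rule}.
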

\begin{proof}
    In the context of a classification \ac{CNN} composed of a convolutional layer, a \ac{ReLU} activation layer, a fully-connected layer, a \emph{Softmax} layer, and a \ac{CE} loss function, the forward propagation process of the model can be described as follows:
    \begin{align}
        x &= input \nonumber\\
        u &= \theta \otimes x + b_1 \nonumber\\
        a &= ReLU(u) \nonumber\\
        a^{\prime} &= flatten(a) \nonumber\\
        z &= \lambda \cdot a^{\prime} + b_2 \nonumber\\
        \hat{y} &= softmax(z) \nonumber\\
        \mathcal{L} &= CE(y, \hat{y}) \nonumber
    \end{align}
    The derivatives of $\mathcal{L}$ with respect to $\lambda$ and $b_2$ are consistent with those Claim~\ref{claim:2}. Therefore, our primary focus will be on the gradient of the convolutional layer:
    \begin{align}
    \label{equ:conv_gradient}
        \frac{\partial \mathcal{L}}{\partial \theta} &= \frac{\partial \mathcal{L}}{\partial u} \frac{\partial u}{\partial \theta}
    \end{align}
    $\frac{\partial \mathcal{L}}{\partial u}$ is known in Eqns.~\ref{equ:gradient_loss_fc_weight} \& \ref{equ:gradient_loss_fc_bias}. According to the convolutional function, we have:
    \begin{align}
        u_{(i,j,d)} = \sum_h \sum_w \sum_c \theta_{(d,h,w,c)} \cdot x_{(i+h, j+w, c)} + b_{1d}
    \end{align}
    where $i$ and $j$ represent the coordinates of the output feature map, $d$ denotes the index of the convolutional kernel, $h$ and $w$ are the coordinates of the kernel and $c$ is the index of the input channel. Then, the derivative of one pixel of the output feature map against any one specific parameter is:
    \begin{align}
        \frac{\partial u_{(i,j,d)}}{\partial \theta_{(d, h^{\prime},w^{\prime}, c^{\prime})}} &= \frac{\partial (\sum_h \sum_w \sum_c \theta_{(d,h,w,c)} \cdot x_{(i+h, j+w, c)} + b_{1d})}{\partial \theta_{(d, h^{\prime},w^{\prime}, c^{\prime})}} \\
            &= \frac{\partial \theta_{(d, h^{\prime},w^{\prime}, c^{\prime})} \cdot x_{(i+h^{\prime}, j+w^{\prime}, c^{\prime})}}{\partial \theta_{(d, h^{\prime},w^{\prime}, c^{\prime})}} \nonumber 
            = x_{(i+h^{\prime}, j+w^{\prime}, c^{\prime})} \nonumber
    \end{align}
    All parameters contribute to the computation of the final model output, enabling the transformation of Eqn.\ref{equ:conv_gradient} into the derivative of the loss function with respect to any convolutional layer parameter (see Eqn.\ref{equ:gradient_loss_theta_prime}). It is worth noting that each convolutional kernel is associated with only one output channel, meaning that the first kernel with the input generates the first channel of the output, the second kernel with the input results in the second channel of the output, and so on.
    \begin{align}
        \frac{\partial \mathcal{L}}{\partial \theta_{(d, h^{\prime},w^{\prime}, c^{\prime})}} &= \frac{\partial \mathcal{L}}{\partial u_{(i,j,d)}} \frac{\partial u_{(i,j,d)}}{\partial \theta_{(d, h^{\prime},w^{\prime}, c^{\prime})}} \label{equ:gradient_loss_theta_prime}\\
            &= \frac{\partial \mathcal{L}}{\partial u_{(i,j,d)}} \cdot x_{(i+h^{\prime}, j+w^{\prime}, c^{\prime})} \nonumber
    \end{align}
    Regarding the derivative of the loss function with respect to the bias $b_{1d}$, we have:
    \begin{align}
        \frac{\partial u_{(i,j,d)}}{\partial b_{1d}} &= 1 \\
        \nonumber \\
        \frac{\partial \mathcal{L}}{\partial b_{1d}} &= \sum_i \sum_j \frac{\partial \mathcal{L}}{\partial u_{(i,j,d)}}
    \end{align}
    In conclusion, similar to Claim~\ref{claim:1} and Claim~\ref{claim:2}, referring to Eqns.~\ref{equ:gradient_non_neg}, \ref{equ:gradient_non_neg_bias}, \ref{equ:gradient_loss_fc_weight} \& \ref{equ:gradient_loss_fc_bias}, the gradient in the \ac{CNN} is also highly correlated to the input data and ground-truth label. Our previous work~\cite{ren2022grnn} has demonstrated that higher label inference accuracy leads to better reconstruction performance of the input data. This idea has also been presented in papers~\cite{zhao2020idlg,yin2021see}, but the fundamental rationale for such phenomena was not explored.
\end{proof}

It has been demonstrated that the gradients in both the convolutional and fully-connected layers of a neural network contain ample information about the input data and the associated ground-truth labels to enable their reconstruction. The purpose of the following analysis is to establish that the gradient in the \ac{BN} layer also encodes information about the input data and the ground-truth label.

\begin{definition}
\textbf{\ac{BN} Layer:} To address the vanishing gradient problem arising from input values reaching the saturation zone of the non-linear function, the \ac{BN} layer is utilized for each hidden neuron in a \ac{DNN}. This layer normalizes the input values to a standard normal distribution, which shifts the input values to a region that is more responsive to the input and helps mitigate the vanishing gradient problem.
\end{definition}
Suppose there is a mini-batch of input data: $\mathcal{B}= \{u_1...u_N\}$, where $N$ is the batch size. The \ac{BN} function can be defined as:
\begin{align}
    \mu_{\mathcal{B}} &= \frac{1}{N} \sum_n u_n \\
    \sigma^2_{\mathcal{B}} &= \frac{1}{N} \sum_n (u_n - \mu_{\mathcal{B}})^2 \\
    \hat{u} &= \frac{u - \mu_{\mathcal{B}}}{\sqrt{\sigma^2_{\mathcal{B}}+\epsilon}} \\
    s &= \gamma \cdot \hat{u} + \beta \label{equ_s}
\end{align}
$epsilon$ is a small value to avoid dividing by zero.

\begin{claim}
\label{claim:4}
    The gradient in the \ac{BN} layer contains information about the input data and ground-truth labels.
\end{claim}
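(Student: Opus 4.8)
The plan is to replicate the logical template of Claims~\ref{claim:1}--\ref{claim:3}: compute the derivative of the loss with respect to the two learnable parameters of the \ac{BN} layer, namely the scale $\gamma$ and shift $\beta$ appearing in Eqn.~\ref{equ_s}, and then show that each resulting gradient factorizes into a label-carrying term back-propagated from the output and an input-carrying term supplied by the normalized feature map.

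First I would invoke the \emph{Chain Rule}. Since $s = \gamma \cdot \hat{u} + \beta$, the local derivatives are $\partial s / \partial \gamma = \hat{u}$ and $\partial s / \partial \beta = 1$, so aggregating over the mini-batch $\mathcal{B}$ of size $N$ gives
\begin{align}
\frac{\partial \mathcal{L}}{\partial \gamma} &= \sum_n \frac{\partial \mathcal{L}}{\partial s_n} \cdot \hat{u}_n, &
\frac{\partial \mathcal{L}}{\partial \beta} &= \sum_n \frac{\partial \mathcal{L}}{\partial s_n}.
\end{align}
Next I would interpret the two factors. The upstream signal $\partial \mathcal{L} / \partial s$ is, by the \emph{Chain Rule}, the loss gradient propagated backward from the output layer; by the argument of Claim~\ref{claim:2} it ultimately traces to $(\hat{y}-y)^T$ and therefore encodes the ground-truth label, exactly as in Eqns.~\ref{equ:gradient_non_neg} and~\ref{equ:gradient_non_neg_bias}. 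The companion factor $\hat{u} = (u - \mu_{\mathcal{B}})/\sqrt{\sigma^2_{\mathcal{B}}+\epsilon}$ is a deterministic standardization of the pre-normalization activation $u$, and $u$ is precisely the feature map shown in Claims~\ref{claim:2} and~\ref{claim:3} to carry input-data information. Consequently $\partial \mathcal{L}/\partial \gamma$ couples input and label information, while $\partial \mathcal{L}/\partial \beta$ exposes the label signal directly, which establishes the claim.

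I expect the principal obstacle to lie in the batch-coupling introduced by the normalization statistics. In contrast to the fully-connected and convolutional settings, $\hat{u}_n$ depends on \emph{every} sample of the mini-batch through $\mu_{\mathcal{B}}$ and $\sigma^2_{\mathcal{B}}$; hence writing the backward gradient $\partial \mathcal{L}/\partial u$ needed to continue propagation to lower layers requires differentiating through these statistics and yields several coupled terms rather than the single clean product seen in the earlier claims. For the claim itself this full back-propagation is unnecessary, since it suffices to confirm that the information survives in the parameter gradients above. I would, however, remark that recovering the raw activation $u$ from $\hat{u}$ additionally demands knowledge of $\mu_{\mathcal{B}}$ and $\sigma^2_{\mathcal{B}}$; this dependency is exactly the lever exploited by the subsequent key-lock construction, where replacing the trainable $\gamma$ and $\beta$ with key-generated coefficients deliberately severs the recoverability of the input.
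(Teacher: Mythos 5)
Your proposal is correct and follows essentially the same route as the paper: the paper's proof likewise applies the \emph{Chain Rule} to $s=\gamma\cdot\hat{u}+\beta$ to obtain $\frac{\partial \mathcal{L}}{\partial \gamma} = (\hat{y}-y)\cdot\lambda\circ sgn(a)\cdot\hat{u}$ and $\frac{\partial \mathcal{L}}{\partial \beta} = (\hat{y}-y)\cdot\lambda\circ sgn(a)$ (Eqns.~\ref{equ:gradient_loss_gamma} and~\ref{equ:gradient_loss_beta}), reading label information from the upstream factor and input information from $\hat{u}$, exactly as you do. The only difference is one of scope: the paper's proof additionally back-propagates through the batch statistics to derive $\frac{\partial \mathcal{L}}{\partial \hat{u}}$, $\frac{\partial \mathcal{L}}{\partial \sigma^2_{\mathcal{B}}}$, $\frac{\partial \mathcal{L}}{\partial \mu_{\mathcal{B}}}$ and finally $\frac{\partial \mathcal{L}}{\partial u}$ (Eqns.~\ref{equ:gradient_loss_hat_u}--\ref{equ:gradient_loss_un}), the batch-coupled computation you deliberately defer and correctly identify as inessential to the claim itself but central to the subsequent key-lock defense argument (Proposition~\ref{proposition:2}).
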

\begin{proof}
    We modify the network in Claim~\ref{claim:3} by adding a \ac{BN} layer after the convolutional layer:
    \begin{align}
        x &= input \nonumber\\
        u &= \theta \otimes x + b_1 \nonumber\\
        s &= BN_{\gamma,\beta}(u) \nonumber\\
        a &= ReLU(s) \nonumber\\
        a^{\prime} &= flatten(a) \nonumber\\
        z &= \lambda \cdot a^{\prime} + b_2 \nonumber\\
        \hat{y} &= softmax(z) \nonumber\\
        \mathcal{L} &= CE(y, \hat{y}) \nonumber
    \end{align}
    where $\gamma$ and $\beta$ are parameters in \ac{BN} layer. The derivative of the loss function with respect to $\gamma$ and $\beta$ are:
    \begin{align}
        \frac{\partial \mathcal{L}}{\partial \gamma} &= \frac{\partial \mathcal{L}}{\partial s} \frac{\partial s}{\partial \gamma} 
        = (\hat{y}-y) \cdot \lambda \circ sgn(a) \cdot \hat{u} \label{equ:gradient_loss_gamma} 
        \\
        \frac{\partial \mathcal{L}}{\partial \beta} &= \frac{\partial \mathcal{L}}{\partial s} \frac{\partial s}{\partial \beta} 
        = (\hat{y}-y) \cdot \lambda \circ sgn(a) \label{equ:gradient_loss_beta}
    \end{align}    
    Same as Claims~\ref{claim:1}, \ref{claim:2} and \ref{claim:3}, we can show that the gradient of \ac{BN} layer is correlated to the input data and ground-truth label. However, to compute the gradient of the network accurately using the \emph{Chain Rule}, it is necessary to calculate the derivative of the loss function with respect to the input of the \ac{BN} layer, $\frac{\partial \mathcal{L}}{\partial u}$. This computation is not trivial, as we know that the normalized output $\hat{u}$, variance value $\sigma^2_{\mathcal{B}}$ and mean value $\mu_{\mathcal{B}}$ are all related to the input $u$. So we first compute the derivative of $\frac{\partial \mathcal{L}}{\partial \hat{u}}$, $\frac{\partial \mathcal{L}}{\partial \sigma^2_{\mathcal{B}}}$ and $\frac{\partial \mathcal{L}}{\partial \mu_{\mathcal{B}}}$:
    \begin{align}
        \frac{\partial \mathcal{L}}{\partial \hat{u}} &= \frac{\partial \mathcal{L}}{\partial s} \frac{\partial s}{\partial \hat{u}} 
        = (\hat{y}-y) \cdot \lambda \circ sgn(a) \cdot \gamma \label{equ:gradient_loss_hat_u} \\
        \nonumber \\
        \frac{\partial \mathcal{L}}{\partial \sigma^2_{\mathcal{B}}} &= \frac{\partial \mathcal{L}}{\partial s} \frac{\partial s}{\partial \hat{u}} \frac{\partial \hat{u}}{\partial \sigma^2_{\mathcal{B}}} \label{equ:gradient_loss_sigma}\\
            &= -\frac{1}{2} \frac{\partial \mathcal{L}}{\partial \hat{u}} \cdot (u - \mu_{\mathcal{B}})\cdot (\sigma^2_{\mathcal{B}} + \epsilon)^{-\frac{3}{2}} \nonumber \\
        \nonumber \\
        \frac{\partial \mathcal{L}}{\partial \mu_{\mathcal{B}}} &=  \frac{\partial \mathcal{L}}{\partial \hat{u}} \frac{\partial \hat{u}}{\partial \mu_{\mathcal{B}}} + \frac{\partial \mathcal{L}}{\partial \sigma^2_{\mathcal{B}}} \frac{\partial \sigma^2_{\mathcal{B}}}{\partial \mu_{\mathcal{B}}} \label{equ:gradient_loss_mu}\\
            &= \frac{\partial \mathcal{L}}{\partial \hat{u}} \cdot \frac{-1}{\sqrt{\sigma^2_{\mathcal{B}} + \epsilon}} + \frac{\partial \mathcal{L}}{\partial \sigma^2_{\mathcal{B}}} \cdot \frac{-2(u - \mu_{\mathcal{B}})}{N} \nonumber 
    \end{align}
    Then we have:
    \begin{align}
        \frac{\partial \mathcal{L}}{\partial u} &= \frac{\partial \mathcal{L}}{\partial \hat{u}} \frac{\partial \hat{u}}{\partial u} + \frac{\partial \mathcal{L}}{\partial \sigma^2_{\mathcal{B}}} \frac{\partial \sigma^2_{\mathcal{B}}}{\partial u} + \frac{\partial \mathcal{L}}{\partial \mu_{\mathcal{B}}} \frac{\partial \mu_{\mathcal{B}}}{\partial u} \label{equ:gradient_loss_un}\\
            &= \frac{\partial \mathcal{L}}{\partial \hat{u}} \cdot \frac{1}{\sqrt{\sigma^2_{\mathcal{B}} + \epsilon}} + \frac{\partial \mathcal{L}}{\partial \sigma^2_{\mathcal{B}}} \cdot \frac{2(u - \mu_{\mathcal{B}})}{N} + \frac{\partial \mathcal{L}}{\partial \mu_{\mathcal{B}}} \cdot \frac{1}{N} \nonumber 
    \end{align}
    In a typical \ac{FL} setting, a malicious server has access to the gradients of each layer, allowing for the calculation of all variables in the equations presented above. The gradients obtained from Eqns.~\ref{equ:gradient_loss_theta_prime} \& \ref{equ:gradient_loss_gamma} contain rich information about the input data in both shallow and deep layers. Additionally, Eqn.~\ref{equ:gradient_loss_gamma} reveals that the trainable parameter $\gamma$ is highly related to the input feature map $\hat{u}$, as observed in Eqns.~\ref{equ:gradient_loss_sigma} \& \ref{equ:gradient_loss_mu}. Consequently, the derivative of the loss function with respect to the output of the convolutional layer $u$, as shown in Eqn.~\ref{equ:gradient_loss_un}, contains sufficiently rich input data information to enable deep leakage attacks.
\end{proof}

Referring to all the reasoning processes above, we can conclude that for any arbitrary \ac{CNN}: 
\begin{proposition}
    The gradients of the fully-connected layer, convolutional layer, and \ac{BN} layer in an image classification task contain sufficient information about the input data and ground-truth label, making it possible for an attacker to reconstruct them by regressing the gradients.
\end{proposition}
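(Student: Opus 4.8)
The plan is to derive the proposition as a direct corollary of Claims~\ref{claim:1}--\ref{claim:4} together with the \emph{Chain Rule}, rather than to re-establish the per-layer identities from scratch. First I would formalise an arbitrary \ac{CNN} as a composition $\mathcal{L} = \ell \circ f_L \circ \cdots \circ f_1$ in which each $f_k$ is one of the three primitive maps already analysed---a fully-connected layer, a convolutional layer, or a \ac{BN} layer---possibly followed by a \ac{ReLU} activation, and $\ell$ is the \emph{Softmax}--\ac{CE} head. The goal is to show that, given the full collection of per-layer gradients $\{\partial\mathcal{L}/\partial\theta_k\}$ (which a malicious server possesses in a standard \ac{FL} round) together with the publicly shared weights, one can recover both the label $y$ and the first-layer input $x$.

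The core of the argument is a backward induction on the layer index $k$, mirroring the structure of the \emph{Chain Rule}. The base case is the output layer: by Eqns.~\ref{equ:gradient_non_neg} and \ref{equ:gradient_non_neg_bias}, $\partial\mathcal{L}/\partial b_2 = (\hat{y} - y)^T$, so the sign pattern of this vector reveals the one-hot label $y$, exactly as in~\cite{zhao2020idlg}. For the inductive step I would assume that the upstream derivative $\partial\mathcal{L}/\partial(\text{output of } f_{k})$ is computable and show it propagates to $\partial\mathcal{L}/\partial(\text{output of } f_{k-1})$. Each of the three layer types supplies the needed factorisation: Claim~\ref{claim:2} gives $\partial\mathcal{L}/\partial\theta = (\hat{y} - y)^T \lambda \circ sgn(a) \cdot x$ for the fully-connected case; Claim~\ref{claim:3} gives $\partial\mathcal{L}/\partial\theta_{(d,h',w',c')} = (\partial\mathcal{L}/\partial u_{(i,j,d)})\, x_{(i+h',j+w',c')}$ for the convolutional case; and Claim~\ref{claim:4}, via Eqns.~\ref{equ:gradient_loss_gamma}--\ref{equ:gradient_loss_un}, supplies the (less trivial) expression for $\partial\mathcal{L}/\partial u$ across a \ac{BN} layer. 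In every case the layer gradient is the product of a known upstream term and the layer's own input, so the input feature map of $f_k$ is recoverable, which furnishes the upstream derivative for $f_{k-1}$ and closes the induction. Unrolling from the last layer down to $f_1$ then exhibits $x$ as the solution of the resulting system of equations.

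The step I expect to be the main obstacle is justifying that the recovered information is genuinely \emph{sufficient} for reconstruction rather than merely \emph{present}. Two issues complicate an exact inversion. The \ac{ReLU} activation zeroes the gradient on its inactive support (the $sgn(a)$ factor in Claims~\ref{claim:2}--\ref{claim:4}), so the induction only propagates information on the active units; I would handle this by noting that the surviving components still over-determine $x$ once the gradients of all kernels and all spatial locations are stacked, and by appealing to the attack-based, regression view the paper adopts---the attacker optimises $x$ to match the observed gradients rather than solving in closed form, so partial suppression degrades but does not defeat recovery. The second difficulty is the \ac{BN} layer's coupling across the mini-batch: $\mu_{\mathcal{B}}$ and $\sigma^2_{\mathcal{B}}$ mix all samples, so Eqn.~\ref{equ:gradient_loss_un} ties the per-sample derivatives together and the clean factorisation into upstream term times input must be read batch-wise. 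I would address this by treating the batch statistics as additional recoverable unknowns (they too are determined by the shared \ac{BN} gradients, cf.\ Eqns.~\ref{equ:gradient_loss_sigma}--\ref{equ:gradient_loss_mu}) and arguing the enlarged system remains consistent, which is precisely why~\cite{yin2021see} finds \ac{BN} statistics \emph{strengthen} rather than hinder the attack. Assembling these observations over all layers yields the claimed reconstructibility of $(x,y)$ for any \ac{CNN}.
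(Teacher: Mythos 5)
Your proposal takes essentially the same approach as the paper: the paper gives no standalone proof of this proposition, presenting it instead as the direct conclusion of Claims~\ref{claim:1}--\ref{claim:4} (``Referring to all the reasoning processes above, we can conclude\dots''), which is exactly the corollary structure you build on. Your backward-induction scaffolding and your handling of the \ac{ReLU} support and \ac{BN} batch-coupling subtleties elaborate on, rather than depart from, that same argument---indeed they supply rigor the paper itself omits.
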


\subsection{Theoretical Analysis on Proposed Leakage Defense}
\begin{proposition}
\label{proposition:2}
    For a \ac{CNN}, embedding a key-lock module into the model can prevent the inheritance of the private input information through the gradient, where this is achieved by dropping the information of the input data in the gradient.
\end{proposition}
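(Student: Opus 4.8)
The plan is to trace exactly how the private input $x$ reaches the shared gradient once the scale and shift parameters $\gamma,\beta$ of the \ac{BN} layer are no longer free parameters but are instead produced by the lock sub-network from a private key. First I would re-examine the backward chain established in Claim~\ref{claim:4}. The key observation there is that every path through which $x$ influences the gradients of the layers beneath the normalization step --- ultimately the convolutional gradient $\partial\mathcal{L}/\partial\theta$ and hence the reconstruction of $x$ --- factors through $\partial\mathcal{L}/\partial u$, and from Eqn.~\ref{equ:gradient_loss_hat_u} and Eqn.~\ref{equ:gradient_loss_un} this quantity carries the scale $\gamma$ as a multiplicative factor (directly in $\partial\mathcal{L}/\partial\hat{u}$, and indirectly through $\partial\mathcal{L}/\partial\sigma^2_{\mathcal{B}}$ and $\partial\mathcal{L}/\partial\mu_{\mathcal{B}}$). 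Thus $\gamma$ acts as a gate on the entire information-inheritance process.

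Second, I would formalize the key-lock module by substituting $\gamma = \mathcal{K}_\gamma(k)$ and $\beta = \mathcal{K}_\beta(k)$ into Eqn.~\ref{equ_s}, where $k$ is a private key sequence and $\mathcal{K}_\gamma,\mathcal{K}_\beta$ are the fully-connected lock maps whose weights and input $k$ are held only by the client. Both the key and the gradient of the lock layer are excluded from what is transmitted, so the only way $\gamma,\beta$ can influence the server's view is implicitly, through the numerical value of the shared gradients.

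Third --- and this is where the argument actually bites --- I would show that the attacker's reconstruction problem becomes non-identifiable. Following the inversion recipe of Claims~\ref{claim:2}--\ref{claim:4}, recovering $x$ requires evaluating $\partial\mathcal{L}/\partial u$, which by the previous step requires knowing $\gamma$. Since $\gamma$ is a secret output of $\mathcal{K}_\gamma$, for any candidate input $x'$ one can exhibit a compatible scale $\gamma'$ that reproduces the observed gradient, so the observed gradient no longer pins down $x$; in this precise sense the input information has been \emph{dropped} from the transmitted gradient. I would reinforce this with a gradient-space argument: substituting a guessed or default $\gamma$ realigns the surrogate model onto a different latent and gradient geometry than the client's, so the cosine or Euclidean gradient-matching objective used by \ac{DLG}-type attacks has no stationary point at the true $x$.

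The main obstacle I anticipate is making the non-identifiability statement airtight rather than merely plausible --- that is, arguing the unknown $\gamma$ genuinely cannot be recovered from the shared gradients alone, ruling out the possibility that $\gamma$ leaks through the transmitted gradients of $\lambda$, $\theta$, or $b_1$. I would handle this by checking, term by term in Eqns.~\ref{equ:gradient_loss_gamma}--\ref{equ:gradient_loss_un}, that $\gamma$ appears only entangled with the likewise-unknown input feature maps, so that no single transmitted quantity isolates it. This entanglement is exactly what guarantees that, without the private key, the backward propagation cannot be reconstructed and the private input information fails to propagate into the shared gradient.
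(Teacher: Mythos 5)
Your overall route is the paper's own: make $\gamma$ and $\beta$ secret outputs of a lock layer driven by a private key, observe that the backward chain of Claim~\ref{claim:4} (Eqns.~\ref{equ:gradient_loss_hat_u}, \ref{equ:gradient_loss_sigma}, \ref{equ:gradient_loss_mu}, \ref{equ:gradient_loss_un}) can then no longer be evaluated by the server, and insist that both the key and the lock-layer gradient stay local --- the paper makes this last point explicit by computing $\partial\mathcal{L}/\partial\omega$ and $\partial\mathcal{L}/\partial\phi$ (Eqns.~\ref{equ:26}--\ref{equ:27}), which would expose $\hat{u}$ if $k$ were known, and then rewriting the blocked term as Eqn.~\ref{equ:gradient_loss_hat_uu}. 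Up to that point your plan and the paper's proof of Proposition~\ref{proposition:2} coincide.

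The gap is in your third step. The claim that ``for any candidate input $x'$ one can exhibit a compatible scale $\gamma'$ that reproduces the observed gradient'' is strictly stronger than what your term-by-term entanglement check can deliver, and it is false as stated. Concretely: with batch size 1 the attacker can recover the post-ReLU feature map $a'$ exactly from the shared output-layer gradients by the paper's own Claim~\ref{claim:2} recipe (the weight gradient $\partial\mathcal{L}/\partial\lambda=(\hat{y}-y)^{T}\cdot a'$ is rank one, and dividing by the bias gradient $\partial\mathcal{L}/\partial b_4=(\hat{y}-y)^{T}$ isolates $a'$). Hence $s=\gamma\hat{u}+\beta$ is known wherever it is positive; and since $\hat{u}$ is itself a per-channel affine function of $u=\theta\otimes x+b_1$ with $\theta$ and $b_1$ public, the secrecy of $(\gamma,\beta)$ buys only a two-scalar-per-channel affine ambiguity in $u$. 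That system is heavily overdetermined --- two unknown scalars per channel against an entire spatial feature map per channel, with every channel driven by the same $x$ --- so most candidate inputs $x'$ admit no compatible $\gamma'$ at all, and the true $x$ remains far more constrained than your non-identifiability statement asserts. This is consistent with the paper's own experiments, where \ac{GRNN} still recovers faint but visible structure through the lock (Table~\ref{tab:dp}). The paper accordingly proves only the weaker, recipe-blocking statement: without $k$, $\omega$, $b_2$ the attacker cannot evaluate Eqn.~\ref{equ:gradient_loss_hat_uu} or the chain down to $\partial\mathcal{L}/\partial u$, and cannot determine $s$ from Eqn.~\ref{equ_s}. Your proof should either retreat to that claim or confront the affine-ambiguity attack sketched above head on; similarly, your assertion that DLG-type gradient-matching objectives have ``no stationary point at the true $x$'' under a guessed $\gamma$ is stated but never argued, and nothing in Eqns.~\ref{equ:gradient_loss_gamma}--\ref{equ:gradient_loss_un} supplies it.
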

\begin{proof}
    Theoretical analysis is conducted to investigate how the key-lock module prevents the inheritance of private information throughout the gradient in a \ac{CNN}. Unlike in the traditional \ac{BN} layer where $\lambda$ and $\beta$ are trainable parameters, the key-lock module generates them as the output of two fully-connected layers called lock, which take a private input sequence called key as input. The proposed approach introduces new trainable parameters, namely $\omega$, $\phi$, and two biases $b_2$ and $b_3$. The network is defined as follows:
    \begin{align}
        x &= input \nonumber\\
        k &= key \nonumber\\
        u &= \theta \otimes x + b_1 \nonumber\\
        \gamma &= \omega \cdot k + b_2 \nonumber\\
        \beta &= \phi \cdot k + b_3 \nonumber\\
        s &= BN_{\gamma,\beta}(u) \nonumber\\
        a &= ReLU(s) \nonumber\\
        a^{\prime} &= flatten(a) \nonumber\\
        z &= \lambda \cdot a^{\prime} + b_4 \nonumber\\
        \hat{y} &= softmax(z) \nonumber\\
        \mathcal{L} &= CE(y, \hat{y}) \nonumber
    \end{align}
    The derivative of the loss function with respect to $\omega$ and $\phi$ are:
    \begin{align}
        \frac{\partial \mathcal{L}}{\partial \omega} &= \frac{\partial \mathcal{L}}{\partial s} \frac{\partial s}{\partial \omega} 
        = (\hat{y}-y) \cdot \lambda \circ sgn(a) \cdot k \cdot \hat{u} \label{equ:26}
        \\
        \frac{\partial \mathcal{L}}{\partial \phi} &= \frac{\partial \mathcal{L}}{\partial s} \frac{\partial s}{\partial \phi} 
        = (\hat{y}-y) \cdot \lambda \circ sgn(a) \cdot k  \label{equ:27}
    \end{align}
    The gradient in the lock layer is found to be closely related to the key sequence and feature map, which makes it possible for an attacker to infer the gradient of the lock layer given the key and leak the input data. To address this issue, we propose to keep both the key sequence and the weight of the lock layer private in our proposed method, FedKL. Once embedding the key-lock module, then Eqn.~\ref{equ:gradient_loss_hat_u} can be rewritten:
    \begin{align}
        \label{equ:gradient_loss_hat_uu}
        \frac{\partial \mathcal{L}}{\partial \hat{u}} &= \frac{\partial \mathcal{L}}{\partial s} \frac{\partial s}{\partial \hat{u}} 
        = \frac{\partial \mathcal{L}}{\partial s} \cdot (\omega \cdot k + b_2)  \\
        &= (\hat{y}-y) \cdot \lambda \circ sgn(a) \cdot (\omega \cdot k + b_2)\nonumber
    \end{align}
    The key sequence $k$, lock layer's weight $\omega$, and bias $b_2$ are considered confidential in our proposed FedKL. $\phi$ and $b_3$ are omitted in back-propagation. According to Eqn.~\ref{equ_s}, the derivative of $s$ with respect to $\hat{u}$ equals $\gamma$. Therefore, in Eqn.~\ref{equ:gradient_loss_hat_uu}, $\phi$ and $b_3$ are omitted due to the omission of $\beta$. In contrast to Section~\ref{sec:TAGL} Claim~\ref{claim:4}, the unknown $\gamma$ and $\beta$ make it impossible to compute the components in Eqns. \ref{equ:gradient_loss_sigma}, \ref{equ:gradient_loss_mu} \& \ref{equ:gradient_loss_un}. Similarly, in Eqn.~\ref{equ:gradient_loss_gamma}, $\hat{u}$ can be computed, but $s$ cannot be determined in Eqn.~\ref{equ_s}. Therefore, it is not possible to infer the input information from the feature map $s$. The proposed key-lock module effectively prevents the feature map from inheriting the input information and carrying it throughout the gradient when the private key and parameter of the lock module are unknown to the attacker.
\end{proof}

\subsection{Key-Lock Module}

A \ac{BN} layer typically succeeds a convolutional layer and encompasses two processing stages: normalization followed by scaling and shifting. The coefficients for scale and shift comprise two trainable parameters that are updated via back-propagation. Stemming from the theoretical examination earlier, we propose a key-lock module to defend against gradient leakage attacks in \ac{FL}, referred to as FedKL. This module can be incorporated subsequent to the convolution-normalization block, wherein the scale factor ($\gamma \in \mathbb{R}^O$) and the shift bias ($\beta \in \mathbb{R}^O$) cease to be trainable parameters and instead become outputs of the key-lock module. Here, $O$ denotes the number of output channels of the convolutional layer. The input of the convolutional layer is represented as $X_{conv} \in \mathbb{R}^{B\times C\times W\times H}$, with $B$ denoting the batch size, $C$ representing the number of channels, and $W$ and $H$ signifying the width and height of the input image, respectively. The weight in the convolutional layer is denoted as $W_{conv} \in \mathbb{R}^{C\times O\times K\times K}$, with $K$ indicating the kernel size. We define the input key, $X_{key} \in \mathbb{R}^S$, as having a length of $S$, and the weight of the key-lock module as $W_{lock} \in \mathbb{R}^{S \times O}$. The bias is represented as $b$. Therefore, the embedding transformation $\mathcal{F}()$ can be formulated as:
\begin{align}
\mathcal{F}(X_{conv}, X_{key}) &= \gamma \cdot (X_{conv} \otimes W_{conv}  ) + \beta \nonumber\\
\gamma &= X_{key} \cdot W_{lock-\gamma} + b_{lock-\gamma} \nonumber\\
\beta &= X_{key} \cdot W_{lock-\beta} +b_{lock-\beta} 
\end{align}
where the convolution operation is denoted by $\otimes$, while the inner product is represented by $\cdot$. It is important to note that $\gamma$ and $\beta$ in the general normalization layer are trainable parameters; however, in our proposed methodology, they are the outputs of the lock layer, given the private key sequence. Fig.~\ref{fig:key-lock} demonstrates a typical \ac{DNN} building block, incorporating both a general convolution-normalization module and the proposed key-lock module.

\begin{figure}[t!]
    \centering
    \includegraphics[width=0.99\linewidth]{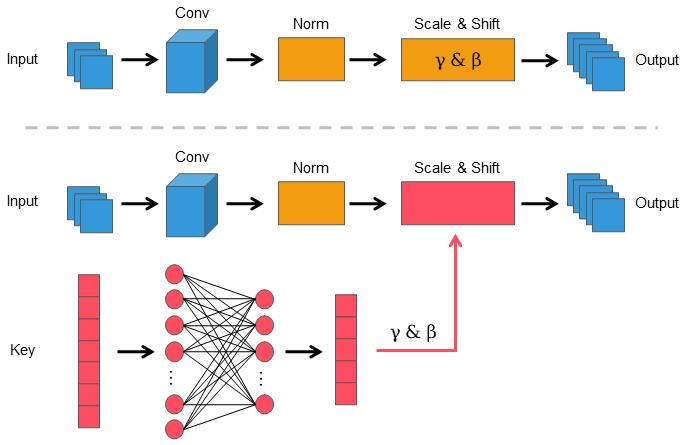}
    \caption{The top is the illustration of general convolution-normalization block, while the bottom indicates a key-lock module embedded in the block for generating the parameters, $\gamma$ and $\beta$.}
    \label{fig:key-lock}
\end{figure}

\subsection{FL with Key-Lock Module}

\begin{figure}[!ht]
    \centering
    \includegraphics[width=0.99\linewidth]{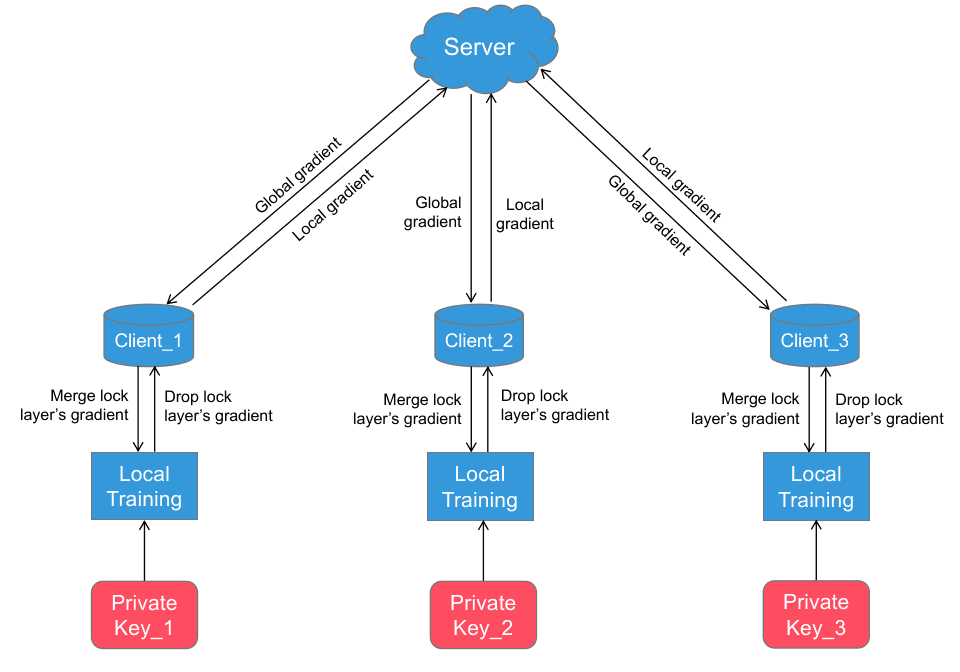}
    \caption{FL system with the key-lock module.}
    \label{fig:FLKL}
\end{figure}

It is crucial to emphasize that the proposed key-lock module solely modifies the \ac{DNN} architecture and does not require alterations to the \ac{FL} strategy. In a conventional \ac{FL} system, the global model is initially instantiated on the server and subsequently distributed to each client for secure local training. Local gradients from clients are aggregated on the server, converging into a new global model for the subsequent round of local training. To implement the proposed key-lock module within the system, the server first initializes the model with the key-lock module and then distributes the model to each client. Individual clients generate their own private key sequences for local training. At each periodic aggregation stage, the gradient of the model, excluding the lock layer, is merged at the server to generate the new global model. Throughout the process, both the key sequence and lock layer of each client are maintained as strictly confidential and securely localized. The global model does not retain any information from a client's key-lock module; therefore, the global model and gradient from the local client cannot be employed for data reconstruction attacks. In the subsequent round of local training, each client first updates its own model using the distributed global model, without updating the lock layer. The client's lock layer is then integrated into the new model. Fig.~\ref{fig:FLKL} details a \ac{FL} system incorporating the key-lock module.

\section{Experiments}
\label{sec:er}

In this section, we first introduce the datasets and metrics employed for benchmark evaluation. Then, we present three sets of experiments to assess: 1) the influence of prediction accuracy of \ac{DNN}s with the proposed key-lock module; 2) the effectiveness of defending against state-of-the-art gradient leakage attacks, including \ac{DLG}, \ac{GRNN}, and \ac{GGL}; 3) the ablation study of various key hyperparameters of \ac{DNN}s and training settings.

\begin{table*}[!ht]
\setlength{\tabcolsep}{8pt}
\begin{center}
\caption{Testing accuracy(\%) from models trained at different settings. The red ones are the highest accuracy and the blue ones are the next highest. C-10 represents CIFAR-10 and C-100 is CIFAR-100. KL means key-lock module. The accuracy results in the Reference row are taken from the relevant papers.}
\label{tab:accuracy}
\begin{tabular}{c c|c|c c|c c|c c|c c|c c}
\hline
\multicolumn{2}{c|}{\textbf{Model}} & \makecell[c]{\emph{LeNet}\\(32*32)} & \multicolumn{2}{c|}{\makecell[c]{\emph{ResNet-20}\\(32*32)}} & \multicolumn{2}{c|}{\makecell[c]{\emph{ResNet-32}\\(32*32)}} & \multicolumn{2}{c|}{\makecell[c]{\emph{ResNet-18}\\(224*224)}} & \multicolumn{2}{c|}{\makecell[c]{\emph{ResNet-34}\\(224*224)}} & \multicolumn{2}{c}{\makecell[c]{\emph{VGG-16}\\(224*224)}} \\
\hline
\multicolumn{2}{c|}{\textbf{Dataset}} & MNIST & C-10 & C-100 & C-10 & C-100 & C-10 & C-100 & C-10 & C-100 & C-10 & C-100 \\
\hline
\multirow{2}{*}{\textbf{Centralized}} & \textbf{w/o KL} & 98.09 & {\color{red}91.63} & {\color{red}67.59} & {\color{blue}92.34} & {\color{red}70.35} & 91.62 & 72.15 & 92.20 & 73.21& 89.13 & 63.23 \\
& \cellcolor{red!10} \textbf{w/ KL} & \cellcolor{red!10} 98.07 &\cellcolor{red!10}  90.58 & \cellcolor{red!10} {\color{blue}67.49} & \cellcolor{red!10} 91.05 & \cellcolor{red!10} {\color{blue}69.89} & \cellcolor{red!10} {\color{red}93.12} & \cellcolor{red!10} {\color{red}75.90} & \cellcolor{red!10} {\color{red}94.68} & \cellcolor{red!10} {\color{red}78.22} & \cellcolor{red!10} {\color{red}93.84} & \cellcolor{red!10} {\color{red}74.86}\\
\multirow{2}{*}{\textbf{FL}} & \textbf{FedAvg} & {\color{blue}98.14} & 91.20 & 58.58 & 91.37 & 61.91 & 89.50 & 68.59 & 89.27 & 68.30 & 88.13 & 61.91 \\
& \cellcolor{blue!10} \textbf{FedKL} & \cellcolor{blue!10} 97.45 & \cellcolor{blue!10} 88.45 & \cellcolor{blue!10} 61.97 & \cellcolor{blue!10} 89.29 & \cellcolor{blue!10} 64.17 & \cellcolor{blue!10} {\color{blue}91.66} & \cellcolor{blue!10} {\color{blue}74.19} & \cellcolor{blue!10} {\color{blue}93.27} & \cellcolor{blue!10} {\color{blue}76.61} & \cellcolor{blue!10} {\color{blue}93.54} & \cellcolor{blue!10} {\color{blue}73.06} \\
\hline
\multicolumn{2}{c|}{\textbf{Reference}} & \makecell[c]{{\color{red}99.05}\\\cite{lecun1998gradient}} & \makecell[c]{{\color{blue}91.25}\\\cite{he2016deep}} & - & \makecell[c]{{\color{red}92.49}\\\cite{he2016deep}} & - & - & - & - & - & - & -\\
\hline
\end{tabular}
\end{center}
\end{table*}

\begin{table*}[!ht]
\setlength{\tabcolsep}{3pt}
\begin{center}
\caption{Testing accuracies and losses over epochs on CIFAR10 and CIFAR100 datasets using different networks. The models are all trained in centralized mode. The red solid lines are the results from models with the key-lock module and the blue dashed lines are those from models without the key-lock module.}
\label{tab:acc_loss}
\begin{tabular}{c c c c c c c}
&& \emph{ResNet-20} & \emph{ResNet-32} & \emph{ResNet-18} & \emph{ResNet-34} & \emph{VGG-16} \\
\multirow{7}{*}{\rotatebox{90}{\textbf{CIFAR10}}} & \makecell*[c]{\rotatebox{90}{Accuracy}} & \makecell*[c]{\includegraphics[width=0.17\linewidth]{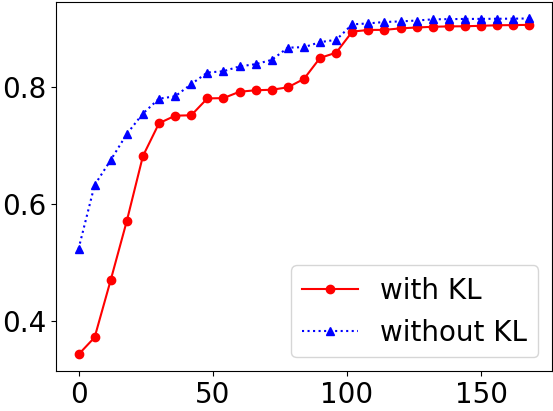}} & \makecell*[c]{\includegraphics[width=0.17\linewidth]{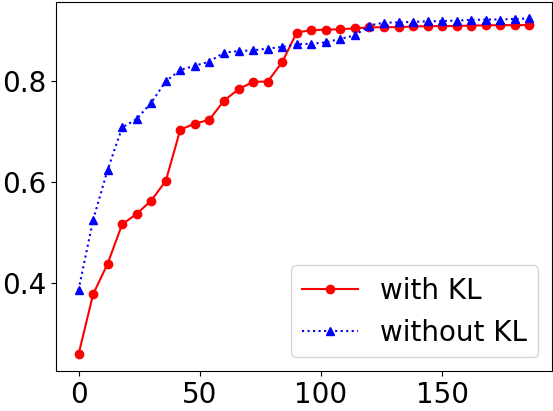}} & \makecell*[c]{\includegraphics[width=0.17\linewidth]{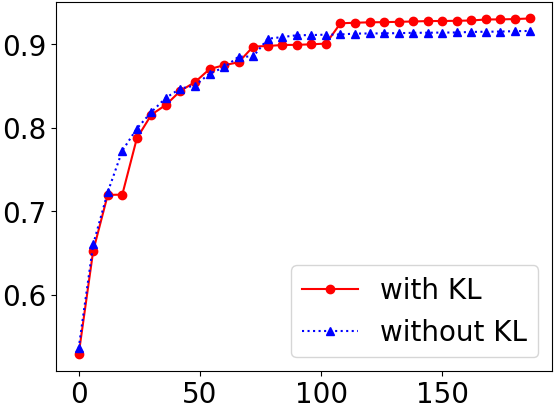}} & \makecell*[c]{\includegraphics[width=0.17\linewidth]{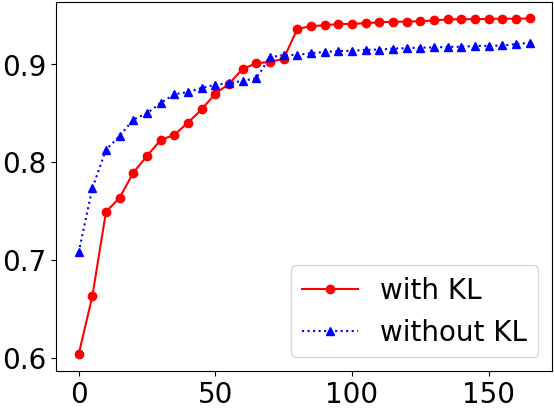}} & \makecell*[c]{\includegraphics[width=0.17\linewidth]{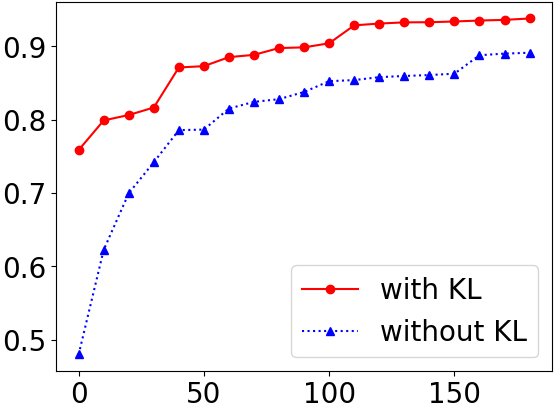}} \\
& \makecell*[c]{\rotatebox{90}{Loss}} & \makecell*[c]{\includegraphics[width=0.17\linewidth]{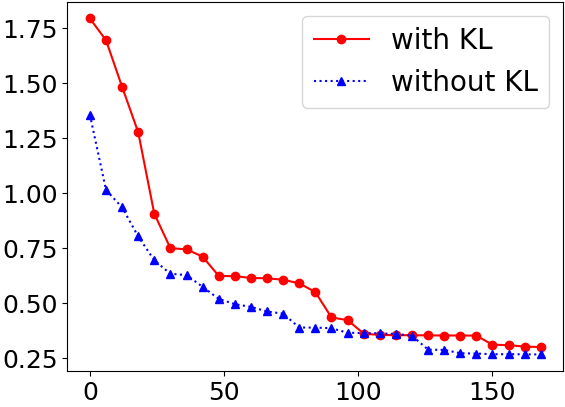}} & 
\makecell*[c]{\includegraphics[width=0.17\linewidth]{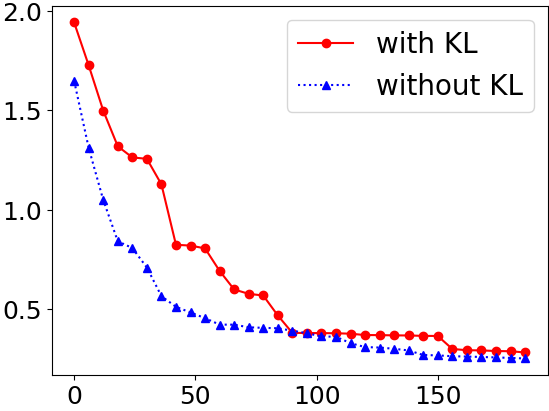}} & \makecell*[c]{\includegraphics[width=0.17\linewidth]{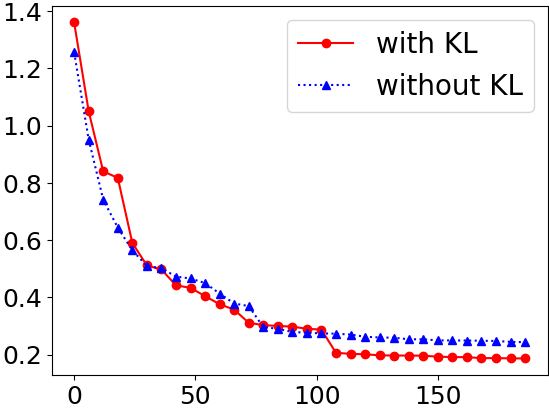}} & \makecell*[c]{\includegraphics[width=0.17\linewidth]{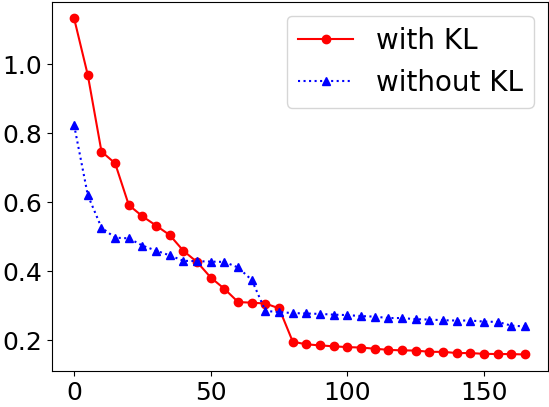}} & \makecell*[c]{\includegraphics[width=0.17\linewidth]{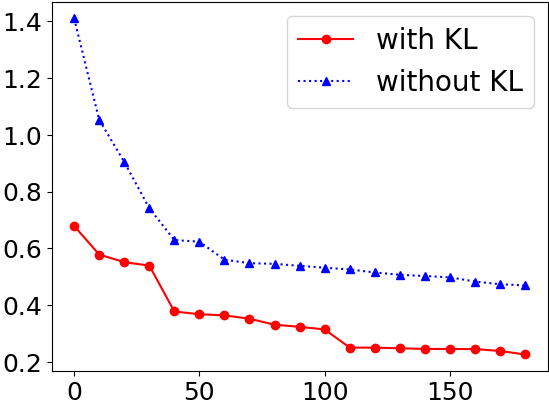}} \\

\multirow{6}{*}{\rotatebox{90}{\textbf{CIFAR100}}} & \makecell*[c]{\rotatebox{90}{Accuracy}} & \makecell*[c]{\includegraphics[width=0.17\linewidth]{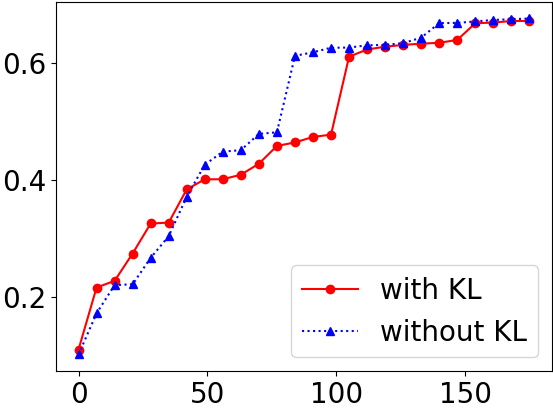}} & \makecell*[c]{\includegraphics[width=0.17\linewidth]{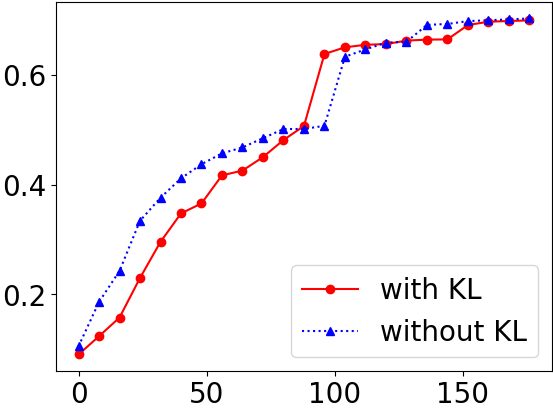}} & \makecell*[c]{\includegraphics[width=0.17\linewidth]{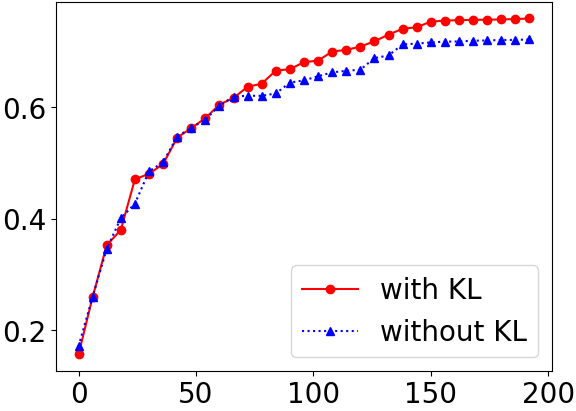}} & \makecell*[c]{\includegraphics[width=0.17\linewidth]{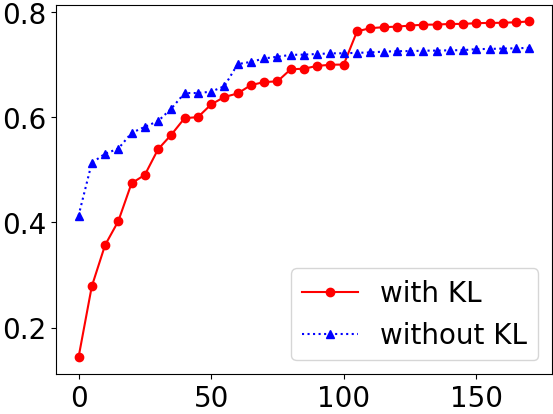}} & \makecell*[c]{\includegraphics[width=0.17\linewidth]{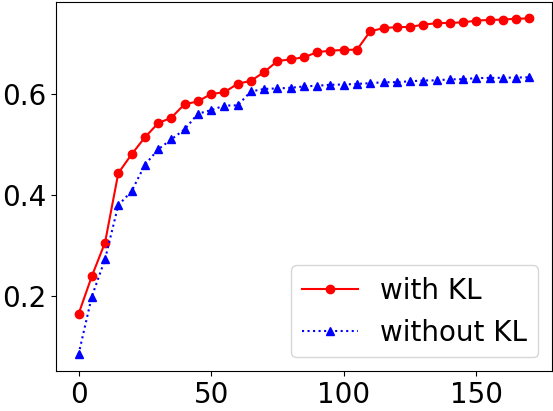}} \\
& \makecell*[c]{\rotatebox{90}{Loss}} & \makecell*[c]{\includegraphics[width=0.17\linewidth]{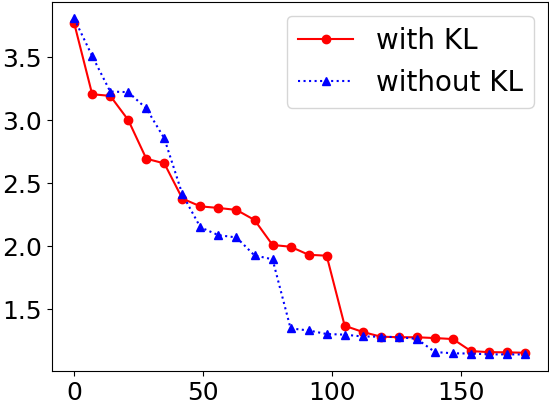}} & 
\makecell*[c]{\includegraphics[width=0.17\linewidth]{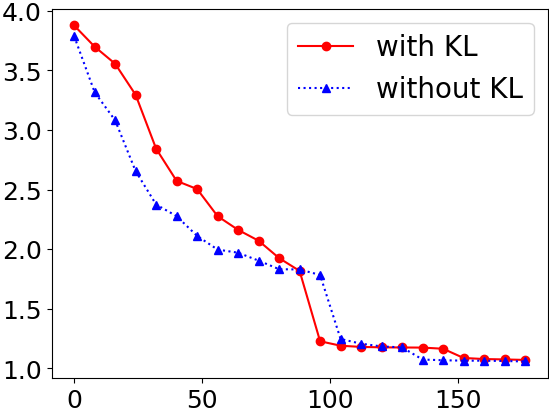}} & \makecell*[c]{\includegraphics[width=0.17\linewidth]{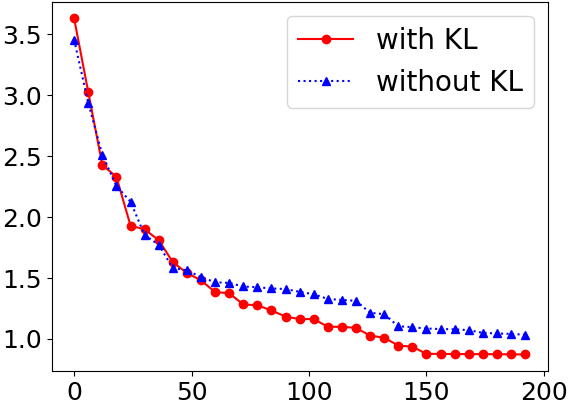}} & \makecell*[c]{\includegraphics[width=0.17\linewidth]{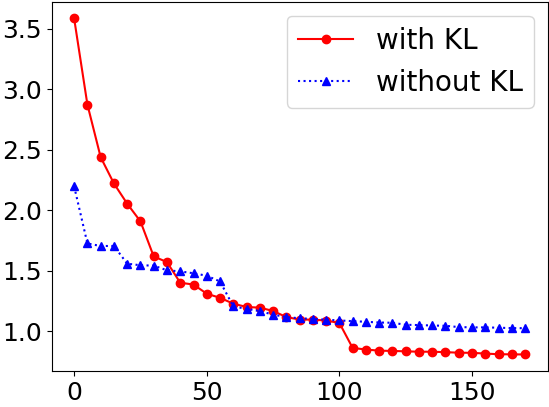}} & \makecell*[c]{\includegraphics[width=0.17\linewidth]{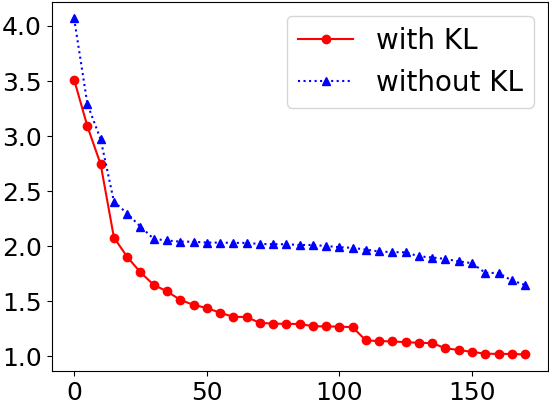}}
\end{tabular}
\end{center}
\end{table*}

\subsection{Benchmarks and Metrics}
Our experiments utilize four popular public benchmarks: MNIST~\cite{lecun1998mnist}, CIFAR-10~\cite{krizhevsky2009learning}, CIFAR-100~\cite{krizhevsky2009learning} and ILSVRC2012~\cite{deng2009imagenet}. Four evaluation metrics are employed to quantify the quality of the generated image, including \ac{MSE}, \ac{PSNR}, \ac{LPIPS}~\cite{zhang2018unreasonable} and \ac{SSIM}~\cite{wang2004image}. 

To allow maximum capacity for gradient leakage attacks and hence to evaluate defense capability, the batch size is set to 1. Consequently, the reconstructed image and the true image can be directly matched. MSE measures the pixel-wise L2 difference between the reconstructed image and the true image. Formally, \ac{MSE} is expressed as $MSE(X, \hat{X}) = ||X - \hat{X}||_2$, where $X$ denotes the true image and $\hat{X}$ represents the reconstructed image. \ac{PSNR} is a criterion for image quality assessment, defined as the logarithm of the ratio of the squared maximum value of RGB image fluctuation over MSE between two images. The definition is given as: $PSNR(X, \hat{X}) = 10 \cdot \lg(\frac{255^2}{MSE(X, \hat{X})})$. A higher \ac{PSNR} score indicates greater similarity between the two images. \ac{LPIPS} essentially computes the similarity between two image patches using predefined networks. We employ two neural networks, \emph{VGGNet}~\cite{simonyan2014very} and \emph{AlexNet}~\cite{krizhevsky2012imagenet}, to perceptually evaluate the similarity of two images. This approach is validated to be consistent with human perception. A lower \ac{LPIPS} score signifies that the two images are perceptually more similar. \ac{SSIM} assesses the proximity of image distortion by detecting changes in structural information. A higher \ac{SSIM} score implies a better match. The proposed key-lock module is exclusively embedded in the first convolution-normalization block for all experiments in this study. 

The length of the randomly generated key sequence is $1024$. The output dimension is $16$ for \emph{ResNet-20} and \emph{ResNet-32} on CIFAR-10 and CIFAR-100 datasets, and $64$ for \emph{ResNet-18} and \emph{ResNet-34} on ILSVRC2012. The framework used to implement the neural network models is PyTorch~\cite{paszke2019pytorch}. The optimizer is \ac{SGD} with an initial learning rate of $0.01$, a decay of $5 \times 10^{-4}$, and a momentum of $0.9$. The learning rate is reduced by a factor of $0.2$ at epochs (rounds) $60$, $120$, and $160$. The total number of training epochs (rounds) is $200$. Our implementation of proposed FedKL is publicly available \footnote{\url{https://github.com/Rand2AI/FedKL}}. 


\subsection{Accuracy Impact}
In order to assess the impact on performance, we devised two approaches: 1) centralized training with or without the key-lock module, and 2) collaborative training within the \ac{FedAvg} framework, also with or without the key-lock module. We employed \emph{LeNet}~\cite{lecun1998gradient}, \emph{VGGNet}~\cite{simonyan2014very}, and \emph{ResNet}~\cite{he2016deep} as the fundamental networks for training image classification models. A total of forty-four models were trained, utilizing eleven experimental configurations and two distinct training methodologies (centralized and federated), to compare standard networks and those incorporating key-lock modules. The findings are presented in Table~\ref{tab:accuracy}.

Overall, the most favorable results are observed when centralized training is implemented. For instance, \emph{LeNet} on MNIST yields the highest performance of $99.05\%$ accuracy according to paper~\cite{lecun1998gradient}, while \emph{ResNet-20} achieves the maximum accuracy of $91.63\%$ on CIFAR-10 and $67.59\%$ on CIFAR-100. The addition of the key-lock module does not lead to a significant decline in model inference performance. In certain instances, models incorporating the key-lock module even outperform those without it. For example, \emph{ResNet-20} on CIFAR-100 attains a testing accuracy of $61.97\%$ within the FedKL framework, which is $5.79\%$ higher than the conventional \ac{FedAvg} model that achieves $58.58\%$. The most substantial performance enhancement is observed with \emph{VGG-16} on CIFAR-100, where the FedKL model attains an accuracy of $73.06\%$, an improvement of $18.43\%$ over the $61.69\%$ achieved using \ac{FedAvg}. Conversely, for experiments utilizing images with a resolution of $32*32$, optimal performance is obtained with centralized configurations without the key-lock module. Nevertheless, we note that the centralized setup incorporating the proposed key-lock module achieves the highest prediction accuracy when the resolution is increased to $224*224$. Moreover, the outcomes from \emph{ResNet-18}, \emph{ResNet-34}, and \emph{VGG-16} indicate that models with the key-lock module significantly outperform those without it in both centralized and \ac{FL} systems.

In Table~\ref{tab:acc_loss}, we illustrate testing accuracies and losses in relation to training epochs for various networks using the CIFAR-10 and CIFAR-100 datasets. All models are trained in the centralized mode. The performance disparities between models with and without the key-lock module are relatively minor. In particular, the convergence trends exhibit a strong correlation in numerous instances, such as with \emph{ResNet-20} and \emph{ResNet-32} trained on CIFAR-100, and \emph{ResNet-18} trained on CIFAR-10. As mentioned previously, when higher resolution is employed, improved prediction performance can be observed utilizing the proposed key-lock module. It is logical to infer that this performance enhancement stems from the increased learning capacity derived from the additional two fully-connected layers present in the key-lock module.

To further understand the impact of the key-lock module on model performance, a comparison was conducted between local and global models by inputting a randomly generated key sequence and employing the initial weights of the lock layer on the server at each training round. Given that the proposed FedKL learns a personalized model due to the unique private key, the accuracies of FedKL, as presented in Table~\ref{tab:accuracy}, are averaged over all local clients. Each client's accuracy is derived from their privately owned key sequence and trained key-lock layer's parameters. The optimal model is selected from all training rounds, with the average accuracy representing the mean of all clients' optimal performances. The ``Random'' row results are obtained by utilizing a randomly generated key sequence and random lock layer weights.

\begin{table}[ht!]
\setlength{\tabcolsep}{3pt}
\begin{center}
\caption{Accuracy (\%) of models trained by FedKL with different input key sequences. The parameter proportion means the ratio of the parameter number in lock layer over the total parameter number in the model.}
\label{tab:FedKL_accuracy}

\begin{tabular}{c | c | c c | c}
\hline
\multirow{2}{*}{\textbf{Model}}  & \multirow{2}{*}{\textbf{Key Source}} & \multicolumn{2}{c|}{\textbf{MNIST}} & \multirow{2}{*}{\makecell[c]{\textbf{Parameter}\\\textbf{Proportion}}} \\
\cline{3-4}
&&\textbf{Acc.} & \textbf{Gap} &\\
\hline
\multirow{5}{*}{\makecell[c]{\emph{LeNet}\\(32*32)}} 
 & Client 0 & 97.88 & 87.43 
 & \multirow{5}{*}{22.43\%} \\
 & Client 1 & 97.90 & 87.45 \\
 & Client 2 & 97.66 & 87.21 \\
 & \cellcolor{red!10} Average& \cellcolor{red!10} 97.45 & \cellcolor{red!10} 87.00 \\
 & \cellcolor{blue!10} Random& \cellcolor{blue!10} 10.45 & \cellcolor{blue!10} 0 \\
\hline
\end{tabular}

\begin{tabular}{c}
~\\
\end{tabular}

\begin{tabular}{c | c | c c | c c | c}
\hline
\multirow{2}{*}{\textbf{Model}}  & \multirow{2}{*}{\textbf{Key Source}} & \multicolumn{2}{c|}{\textbf{C-10}} & \multicolumn{2}{c|}{\textbf{C-100}} & \multirow{2}{*}{\makecell[c]{\textbf{Parameter}\\\textbf{Proportion}}} \\
\cline{3-6}
&& \textbf{Acc.} & \textbf{Gap} & \textbf{Acc.} & \textbf{Gap}\\
\hline
\multirow{5}{*}{\makecell[c]{\emph{ResNet-20}\\(32*32)}} 
 & Client 0 & 88.59 & 9.42 & 62.17 & 33.66 
 & \multirow{5}{*}{10.49\%}\\
 & Client 1  & 87.88 & 8.71 & 61.84 & 33.33 \\
 & Client 2  & 88.63 & 9.46 & 61.99 & 33.48 \\
 & \cellcolor{red!10} Average & \cellcolor{red!10} 88.45 & \cellcolor{red!10} 9.28 & \cellcolor{red!10} 61.97 & \cellcolor{red!10} 33.46 \\
 & \cellcolor{blue!10} Random & \cellcolor{blue!10} 79.17 & \cellcolor{blue!10} 0 & \cellcolor{blue!10} 28.51 & \cellcolor{blue!10} 0 \\
\hline
\multirow{5}{*}{\makecell[c]{\emph{ResNet-32}\\(32*32)}}
& Client 0  & 89.26 & 13.35 & 63.92 & 18.78
& \multirow{5}{*}{6.46\%}\\
& Client 1  & 89.42 & 13.51 & 64.28 & 19.14\\
& Client 2  & 88.83 & 12.92 & 64.17 & 19.03\\
& \cellcolor{red!10} Average  & \cellcolor{red!10} 89.29 & \cellcolor{red!10} 13.38 & \cellcolor{red!10} 64.17 & \cellcolor{red!10} 19.03\\
& \cellcolor{blue!10} Random  & \cellcolor{blue!10} 75.91 & \cellcolor{blue!10} 0 & \cellcolor{blue!10} 45.14 & \cellcolor{blue!10} 0\\
\hline
\multirow{5}{*}{\makecell[c]{\emph{ResNet-18}\\(224*224)}}
& Client 0 & 90.23 & 0.82 & 74.30 & 12.70
& \multirow{5}{*}{1.15\%}\\
& Client 1 & 91.78 & 2.37 & 74.17 & 12.57\\
& Client 2 & 89.63 & 0.22 & 74.28 & 12.68\\
& \cellcolor{red!10} Average & \cellcolor{red!10} 91.66 & \cellcolor{red!10} 2.25 & \cellcolor{red!10} 74.19 & \cellcolor{red!10} 12.59\\
& \cellcolor{blue!10} Random & \cellcolor{blue!10} 89.41 & \cellcolor{blue!10} 0 & \cellcolor{blue!10} 61.60 & \cellcolor{blue!10} 0\\
\hline
\multirow{5}{*}{\makecell[c]{\emph{ResNet-34}\\(224*224)}} 
& Client 0 & 93.40 & 1.63 & 76.74 & 6.98
& \multirow{5}{*}{0.61\%}\\
& Client 1 & 92.76 & 0.99 & 76.25 & 6.49\\
& Client 2 & 93.28 & 1.51 & 76.57 & 6.81\\
& \cellcolor{red!10} Average & \cellcolor{red!10} 93.27 & \cellcolor{red!10} 1.50 & \cellcolor{red!10} 76.61 & \cellcolor{red!10} 6.85\\
& \cellcolor{blue!10} Random & \cellcolor{blue!10} 91.77 & \cellcolor{blue!10} 0 & \cellcolor{blue!10} 69.76 & \cellcolor{blue!10} 0\\
\hline
\multirow{5}{*}{\makecell[c]{\emph{VGG-16}\\(224*224)}} 
& Client 0 & 93.58 & 0.36 & 72.67 & 1.07
& \multirow{5}{*}{0.097\%}\\
& Client 1 & 93.40 & 0.18 & 73.17 & 1.57\\
& Client 2 & 93.59 & 0.37 & 72.76 & 1.16\\
& \cellcolor{red!10} Average  & \cellcolor{red!10} 93.54 & \cellcolor{red!10} 0.32 & \cellcolor{red!10} 73.06 & \cellcolor{red!10} 1.46\\
& \cellcolor{blue!10} Random & \cellcolor{blue!10} 93.22 & \cellcolor{blue!10} 0 & \cellcolor{blue!10} 71.60 & \cellcolor{blue!10} 0\\
\hline
\end{tabular}
\end{center}
\end{table}

\begin{table*}[!ht]
\setlength{\tabcolsep}{3pt}
\begin{center}
\caption{Comparison of image reconstruction using \ac{DLG} and \ac{GRNN} with the key-lock module. In this case, the malicious server has no knowledge of the private key sequence and gradient of the lock layer. ``$\times$'' refers to a failure image reconstruction.}
\label{tab:dp}
\begin{tabular}{c | c | c c c | c c c | c c c }
\hline
\multicolumn{2}{c|}{Model} & \multicolumn{3}{c|}{\makecell[c]{\emph{LeNet}\\(32*32)}} & \multicolumn{3}{c|}{\makecell[c]{\emph{ResNet-20}\\(32*32)}} & \multicolumn{3}{c}{\makecell[c]{\emph{ResNet-18}\\(256*256)}} \\ 
\hline
\multicolumn{2}{c|}{Dataset} & MNIST & C-10 & C-100 & MNIST & C-10 & C-100 & C-10 & C-100 & ILSVRC \\
\hline
\multirow{11}{*}{DLG} & True & \makecell*[c]{\includegraphics[width=0.07\linewidth]{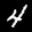}} & \makecell*[c]{\includegraphics[width=0.07\linewidth]{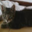}} & \makecell*[c]{\includegraphics[width=0.07\linewidth]{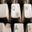}} & $\times$ & $\times$ & $\times$ & $\times$ & $\times$ & $\times$ \\
& w/o KL & \makecell*[c]{\includegraphics[width=0.07\linewidth]{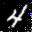}} & \makecell*[c]{\includegraphics[width=0.07\linewidth]{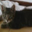}} & \makecell*[c]{\includegraphics[width=0.07\linewidth]{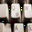}} & $\times$ & $\times$ & $\times$ & $\times$ & $\times$ & $\times$\\
& w/ KL & \makecell*[c]{\includegraphics[width=0.07\linewidth]{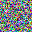}} & \makecell*[c]{\includegraphics[width=0.07\linewidth]{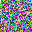}} & \makecell*[c]{\includegraphics[width=0.07\linewidth]{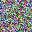}} & $\times$ & $\times$ & $\times$ & $\times$ & $\times$ & $\times$ \\
\hline
\multirow{11}{*}{GRNN} & True & \makecell*[c]{\includegraphics[width=0.07\linewidth]{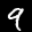}} & \makecell*[c]{\includegraphics[width=0.07\linewidth]{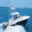}} & \makecell*[c]{\includegraphics[width=0.07\linewidth]{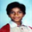}} & \makecell*[c]{\includegraphics[width=0.07\linewidth]{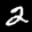}} & \makecell*[c]{\includegraphics[width=0.07\linewidth]{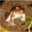}} & \makecell*[c]{\includegraphics[width=0.07\linewidth]{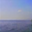}} & \makecell*[c]{\includegraphics[width=0.07\linewidth]{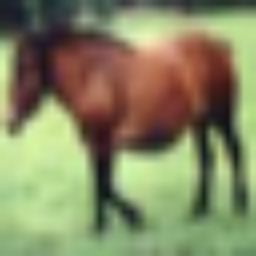}} & \makecell*[c]{\includegraphics[width=0.07\linewidth]{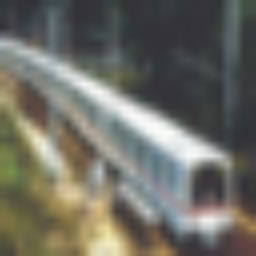}} & \makecell*[c]{\includegraphics[width=0.07\linewidth]{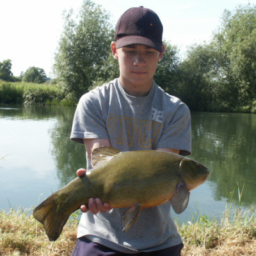}}\\
& w/o KL & \makecell*[c]{\includegraphics[width=0.07\linewidth]{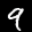}} & \makecell*[c]{\includegraphics[width=0.07\linewidth]{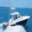}} & \makecell*[c]{\includegraphics[width=0.07\linewidth]{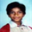}} & \makecell*[c]{\includegraphics[width=0.07\linewidth]{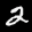}} & \makecell*[c]{\includegraphics[width=0.07\linewidth]{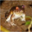}} & \makecell*[c]{\includegraphics[width=0.07\linewidth]{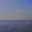}} & \makecell*[c]{\includegraphics[width=0.07\linewidth]{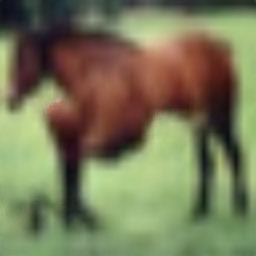}} & \makecell*[c]{\includegraphics[width=0.07\linewidth]{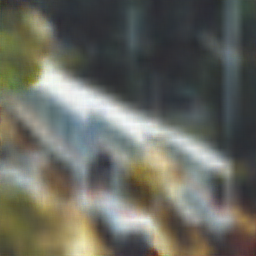}} & \makecell*[c]{\includegraphics[width=0.07\linewidth]{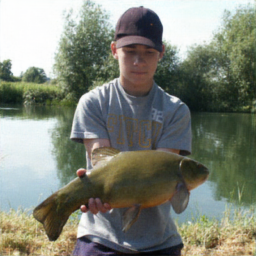}} \\
& w/ KL & \makecell*[c]{\includegraphics[width=0.07\linewidth]{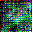}} & \makecell*[c]{\includegraphics[width=0.07\linewidth]{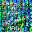}} & \makecell*[c]{\includegraphics[width=0.07\linewidth]{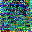}} & \makecell*[c]{\includegraphics[width=0.07\linewidth]{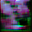}} & \makecell*[c]{\includegraphics[width=0.07\linewidth]{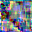}} & \makecell*[c]{\includegraphics[width=0.07\linewidth]{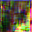}} & \makecell*[c]{\includegraphics[width=0.07\linewidth]{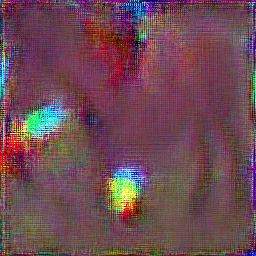}} & \makecell*[c]{\includegraphics[width=0.07\linewidth]{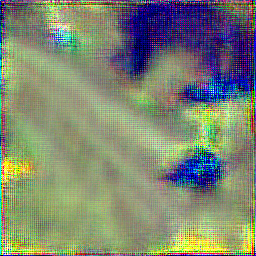}} & \makecell*[c]{\includegraphics[width=0.07\linewidth]{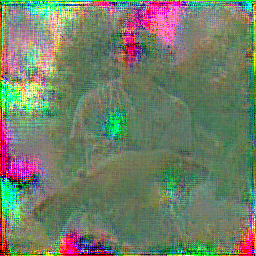}}\\
\hline
\multirow{11}{*}{IG} & True & \makecell*[c]{\includegraphics[width=0.07\linewidth]{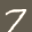}} & \makecell*[c]{\includegraphics[width=0.07\linewidth]{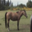}} & \makecell*[c]{\includegraphics[width=0.07\linewidth]{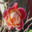}} & \makecell*[c]{\includegraphics[width=0.07\linewidth]{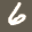}} & \makecell*[c]{\includegraphics[width=0.07\linewidth]{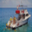}} & \makecell*[c]{\includegraphics[width=0.07\linewidth]{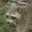}} & \makecell*[c]{\includegraphics[width=0.07\linewidth]{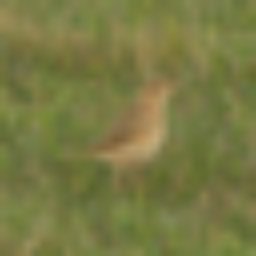}} & \makecell*[c]{\includegraphics[width=0.07\linewidth]{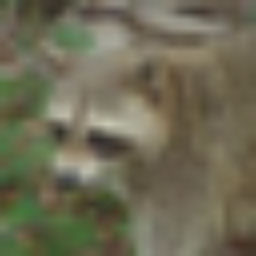}} & \makecell*[c]{\includegraphics[width=0.07\linewidth]{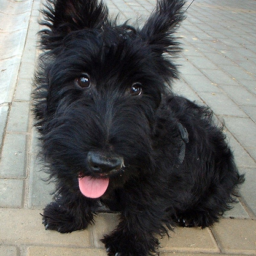}}\\
& w/o KL & \makecell*[c]{\includegraphics[width=0.07\linewidth]{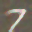}} & \makecell*[c]{\includegraphics[width=0.07\linewidth]{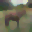}} & \makecell*[c]{\includegraphics[width=0.07\linewidth]{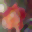}} & \makecell*[c]{\includegraphics[width=0.07\linewidth]{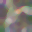}} & \makecell*[c]{\includegraphics[width=0.07\linewidth]{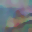}} & \makecell*[c]{\includegraphics[width=0.07\linewidth]{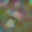}} & \makecell*[c]{\includegraphics[width=0.07\linewidth]{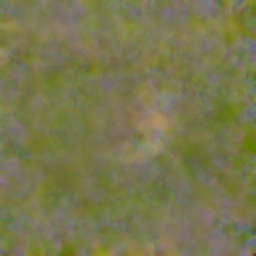}} & \makecell*[c]{\includegraphics[width=0.07\linewidth]{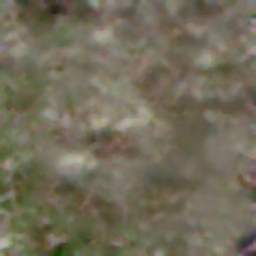}} & \makecell*[c]{\includegraphics[width=0.07\linewidth]{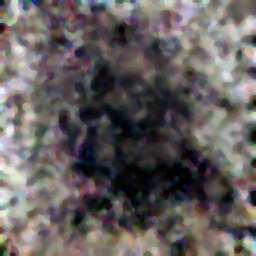}} \\
& w/ KL & \makecell*[c]{\includegraphics[width=0.07\linewidth]{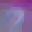}} & \makecell*[c]{\includegraphics[width=0.07\linewidth]{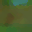}} & \makecell*[c]{\includegraphics[width=0.07\linewidth]{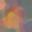}} & \makecell*[c]{\includegraphics[width=0.07\linewidth]{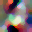}} & \makecell*[c]{\includegraphics[width=0.07\linewidth]{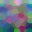}} & \makecell*[c]{\includegraphics[width=0.07\linewidth]{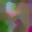}} & \makecell*[c]{\includegraphics[width=0.07\linewidth]{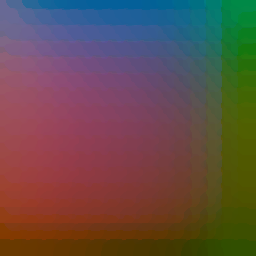}} & \makecell*[c]{\includegraphics[width=0.07\linewidth]{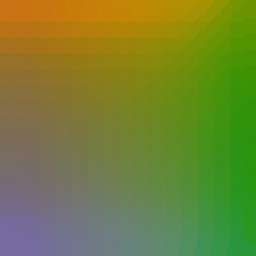}} & \makecell*[c]{\includegraphics[width=0.07\linewidth]{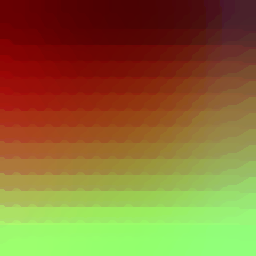}}\\
\hline
\end{tabular}
\end{center}
\end{table*}

In Table~\ref{tab:FedKL_accuracy}, the performance discrepancy is more pronounced for networks with fewer parameters compared to those with a higher number of parameters. The relationship between the lock layer's parameter count and the model's total parameter count was quantified, revealing a positive correlation between the performance gap and the weight proportion. As the lock layer parameter proportion decreases, the performance gap narrows. For instance, the smallest network, \emph{LeNet}, exhibits the highest parameter proportion of $22.43\%$ and the largest performance gap of $87.00\%$ on the model with the best average performance of $97.45\%$. The parameter proportion of \emph{ResNet-20} exceeds that of\emph{ResNet-32}, yet \emph{ResNet-20} outperforms \emph{ResNet-32} with a $33.46\%$ accuracy gap compared to $19.03\%$ on CIFAR-100. However, the performance gap of \emph{ResNet-20} is smaller than that of \emph{ResNet-32} ($9.28\%$ vs. $13.38\%$) on CIFAR-10. The accuracy gaps of \emph{ResNet-18} are larger than those of \emph{ResNet-34} ($2.25\%$ vs. $1.50\%$ on CIFAR-10 and $12.59\%$ vs. $6.85\%$ on CIFAR-100). \emph{VGG-16}, with the lowest parameter proportion of $0.097\%$, exhibits the smallest performance gaps of $0.32\%$ and $1.46\%$ on CIFAR-10 and CIFAR-100, respectively. Apart from network architecture, the dataset exerts a non-negligible influence, \emph{i.e.}, CIFAR-100, with a greater number of classes, consistently yields a larger accuracy gap than CIFAR-10, which has fewer classes. This phenomenon is observed in all experiments.


\subsection{Defense Performance}
\label{sec:dp}

To assess the defensive efficacy against gradient leakage attacks, three state-of-the-art attack methods were employed: \ac{DLG}~\cite{zhu2019deep}, \ac{GRNN}~\cite{ren2022grnn}, \ac{IG}~\cite{geiping2020inverting} and \ac{GGL}~\cite{li2022auditing}. A batch size of 1 and resolution of $32*32$ were used for MNIST, CIFAR-10, and CIFAR-100. ILSVRC2012 utilized a resolution of $256*256$ instead of $224*224$, as \ac{GRNN} is only capable of generating images with resolutions that are exponential multiples of $2$. Images with lower resolution were upsampled using linear interpolation.

In Table~\ref{tab:dp}, qualitative results are presented to demonstrate the comparative performance of defense efficacy against targeted leakage attacks, including \ac{DLG}, \ac{GRNN} and \ac{IG}, with various backbone networks and benchmark datasets. \ac{DLG} consistently fails when employing \emph{ResNet-20} and \emph{ResNet-18}, and as a result, no data is presented. In the case of utilizing \emph{LeNet}, the reconstructed image is entirely perturbed once the key-lock module is integrated into the network. When a \ac{BN} layer or key-lock module was incorporated into \emph{LeNet}, \ac{DLG} failed to reconstruct accurate images in all experiments, as \ac{DLG} directly regresses image pixels by approximating the gradient. In other words, \ac{DLG} can only reconstruct the true image from a gradient with explicit input data information (refer to Section~\ref{sec:TAGL} Claim~\ref{claim:1}: Eqn.~\ref{equ:gradient_loss_linear_function}, Claim~\ref{claim:2}: Eqn.~\ref{equ:gradient_loss_fc_weight}, and Claim~\ref{claim:3}: Eqn.~\ref{equ:gradient_loss_theta_prime}). However, both the \ac{BN} layer and key-lock module re-normalize feature maps, leading to latent space misalignment and input data information ambiguity within the gradient (see Section~\ref{sec:TAGL} Claim~\ref{claim:4}: Eqn.~\ref{equ:gradient_loss_un} and Proposition~\ref{proposition:2} Eqn.~\ref{equ:gradient_loss_hat_uu}). According to the quantitative results on gradient leakage attacks in Table~\ref{tab:qc}, \ac{GRNN} is more potent than \ac{DLG} and \ac{IG}. Nevertheless, the proposed key-lock module can still effectively prevent true image leakage from gradient-based reconstruction. When employing GRNN on \emph{ResNet-18} with an image resolution of $256*256$, the reconstructed image is not recognizable for private data identification, even though the reconstructed image contains minimal, albeit visible, information from the true image.

\begin{table}[!ht]
\setlength{\tabcolsep}{3pt}
\begin{center}
\caption{Typical experimental results performed on \ac{GGL} with and without our proposed FedKL are shown below. The backbone network is ResNet-18 and the dataset is ILSVRC2012 with a resolution of $256*256$.}
\label{tab:ggl}
\begin{tabular}{c c | c c}
&\textbf{True} & \textbf{w/o KL} & \textbf{w/ KL} \\
\multicolumn{1}{m{0.1cm}}{\rotatebox{90}{\textbf{black grouse}}} & \makecell*[c]{\includegraphics[width=0.26\linewidth]{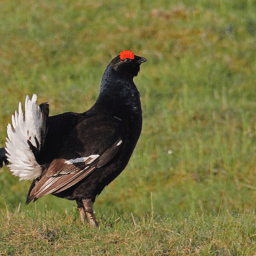}} & \makecell*[c]{\includegraphics[width=0.26\linewidth]{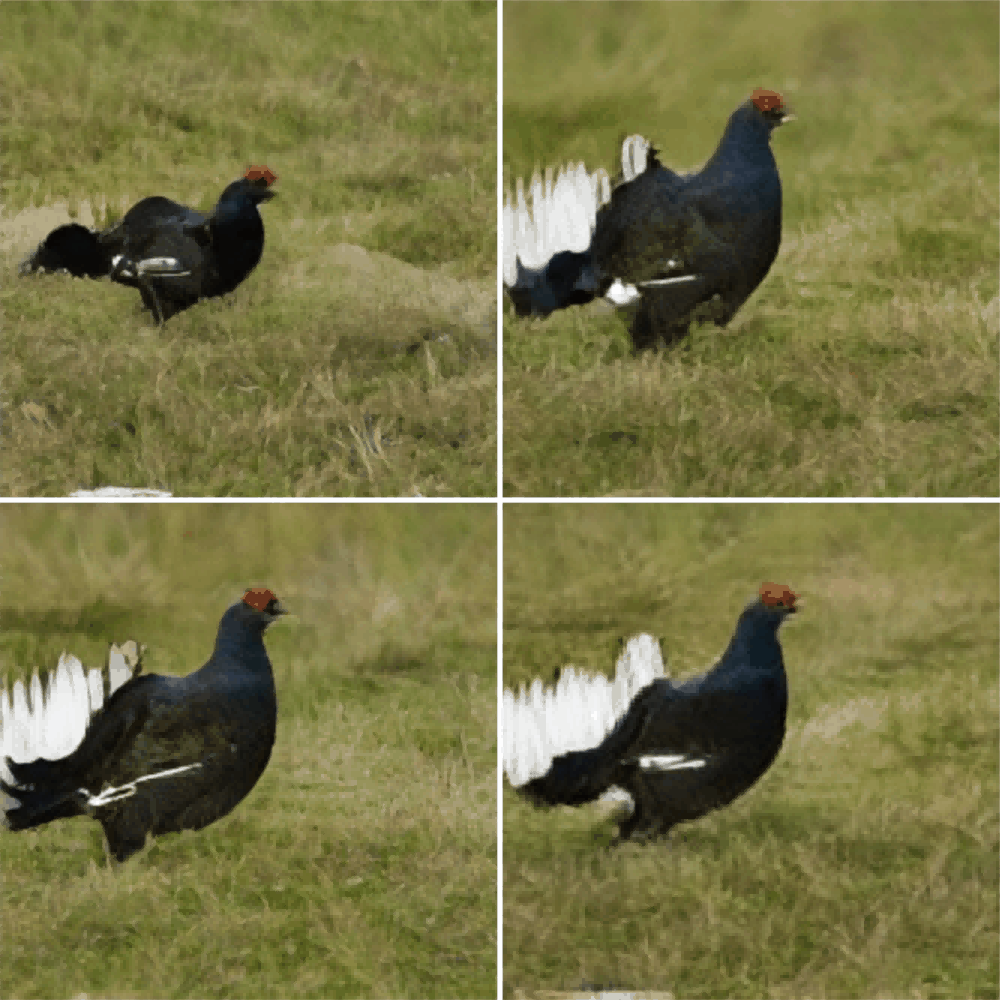}} & \makecell*[c]{\includegraphics[width=0.26\linewidth]{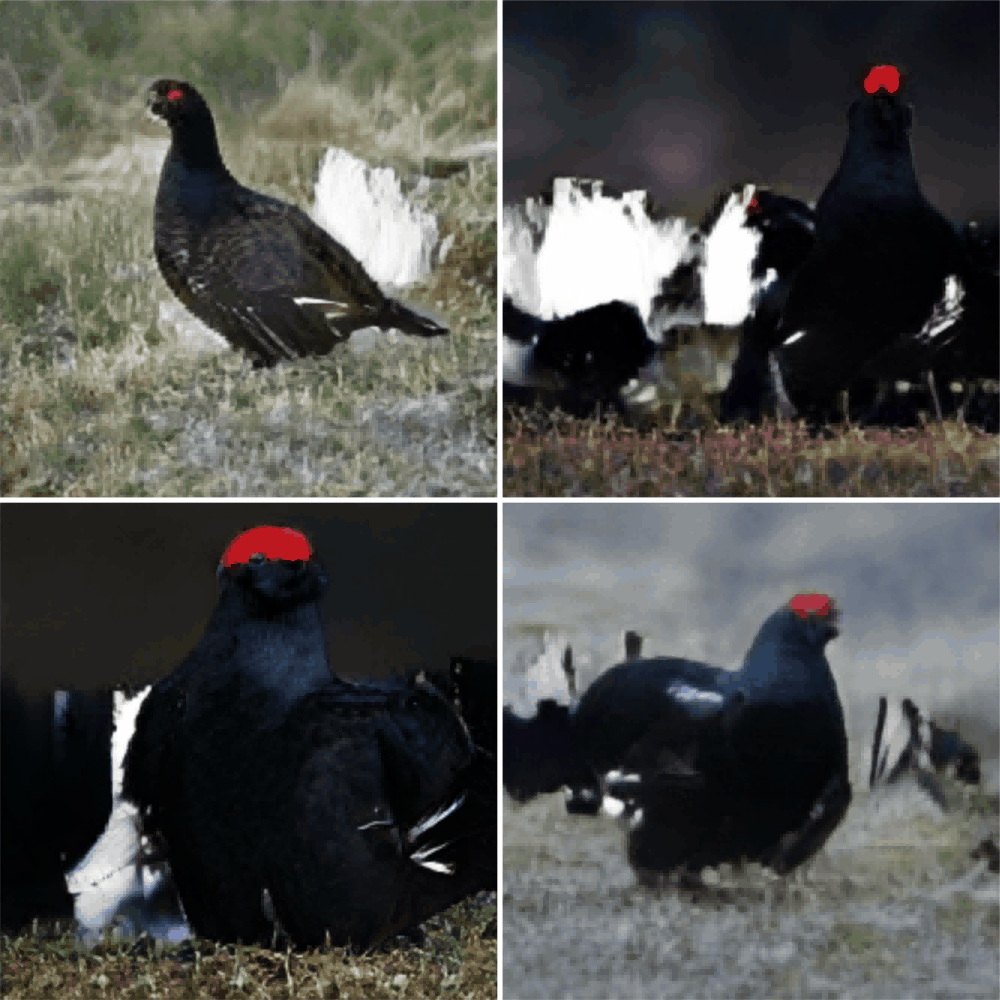}} \\

\multicolumn{1}{m{0.1cm}}{\rotatebox{90}{\textbf{tiger beetle}}} & \makecell*[c]{\includegraphics[width=0.26\linewidth]{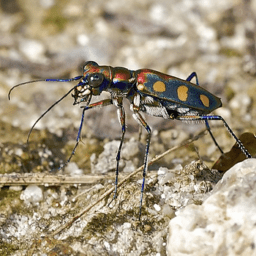}} & \makecell*[c]{\includegraphics[width=0.26\linewidth]{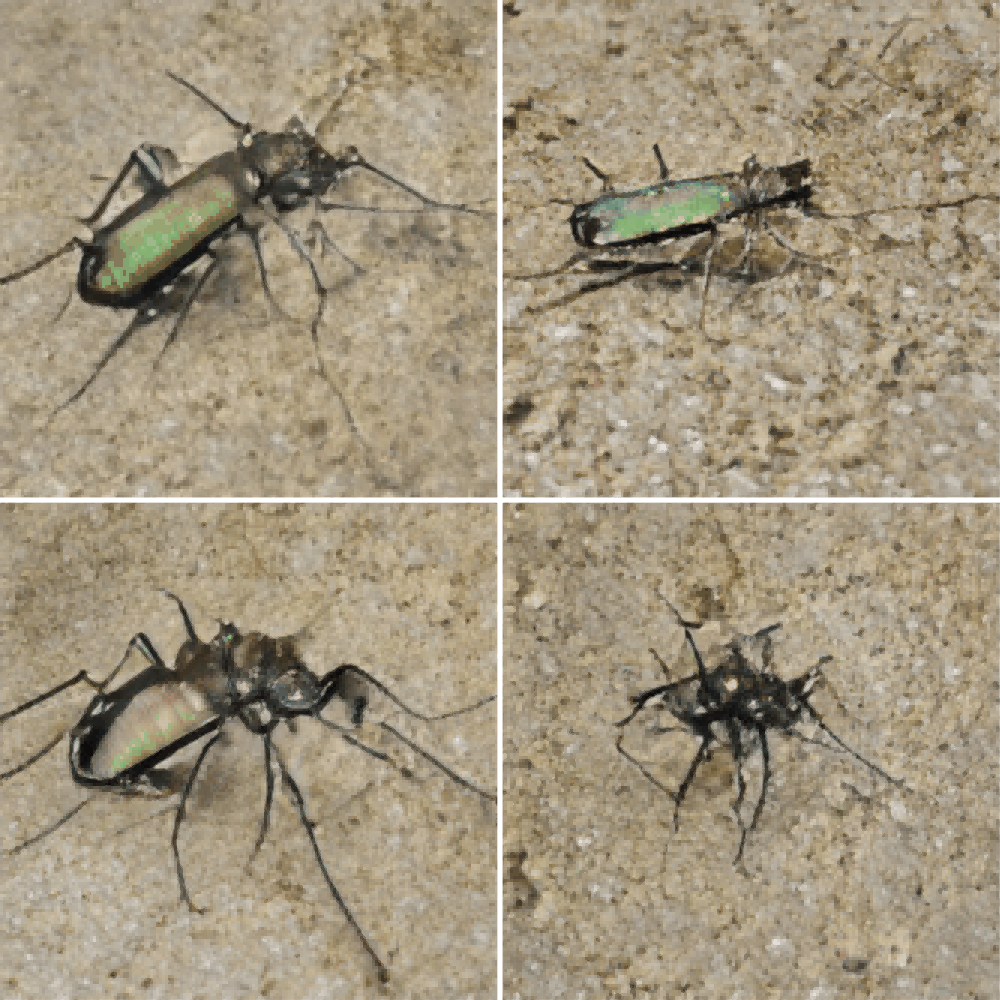}} & \makecell*[c]{\includegraphics[width=0.26\linewidth]{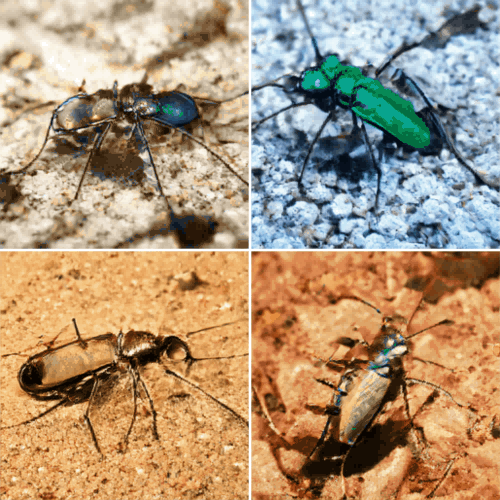}} \\

\multicolumn{1}{m{0.1cm}}{\rotatebox{90}{\textbf{cliff dwelling}}} & \makecell*[c]{\includegraphics[width=0.26\linewidth]{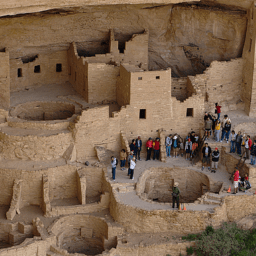}} & \makecell*[c]{\includegraphics[width=0.26\linewidth]{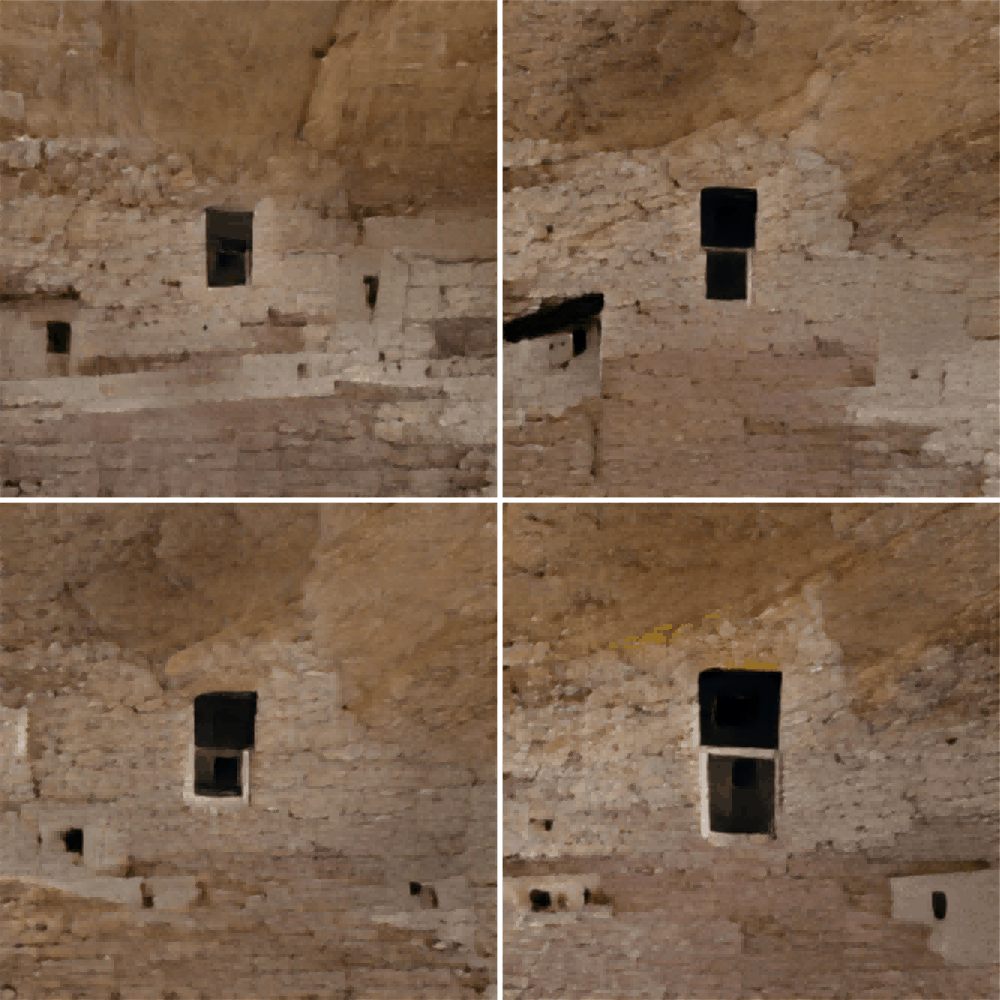}} & \makecell*[c]{\includegraphics[width=0.26\linewidth]{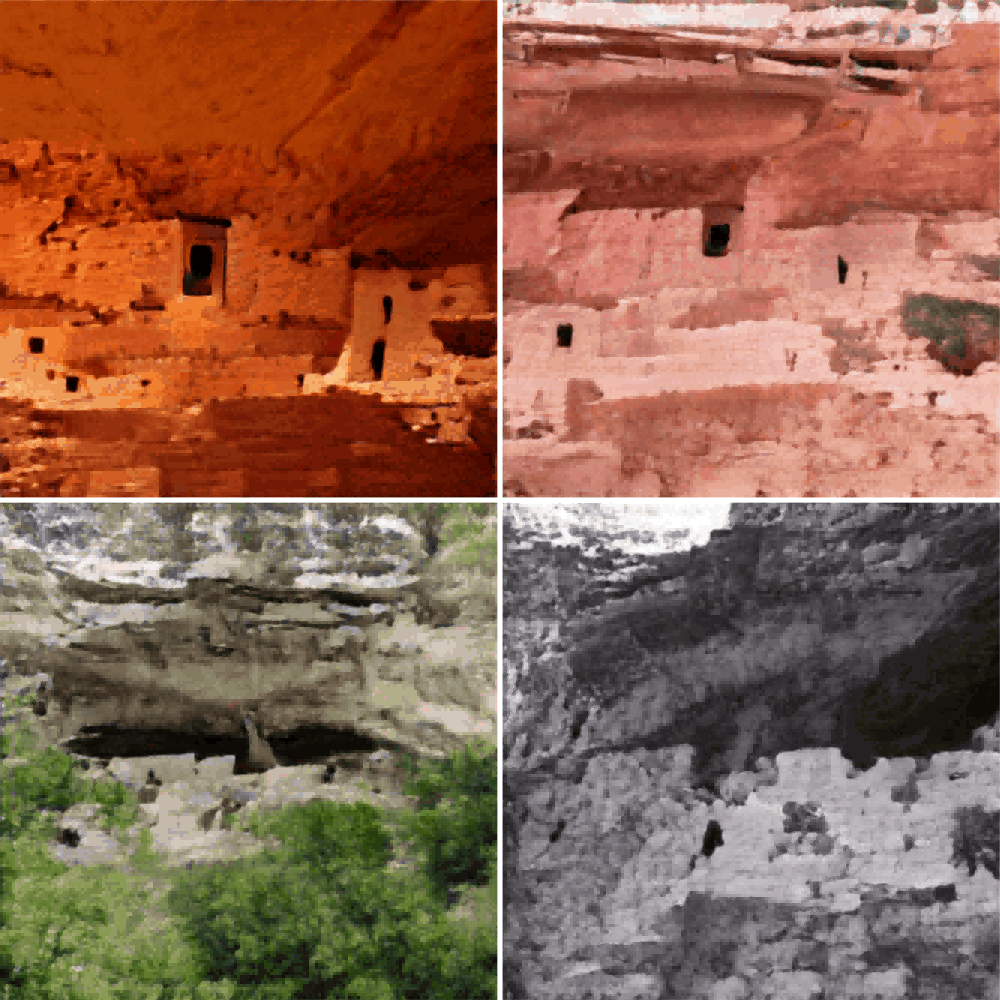}} \\

\multicolumn{1}{m{0.1cm}}{\rotatebox{90}{\textbf{basset hound}}} & \makecell*[c]{\includegraphics[width=0.26\linewidth]{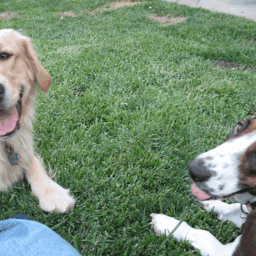}} & \makecell*[c]{\includegraphics[width=0.26\linewidth]{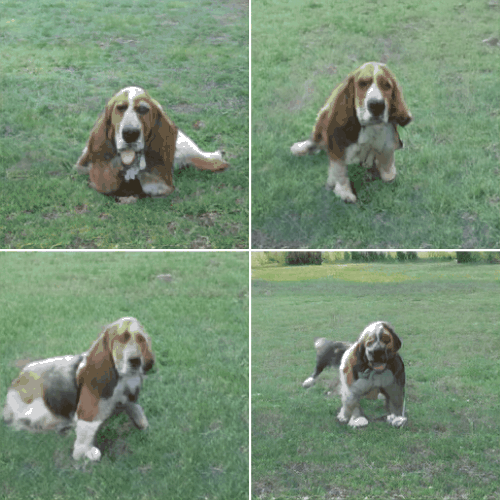}} & \makecell*[c]{\includegraphics[width=0.26\linewidth]{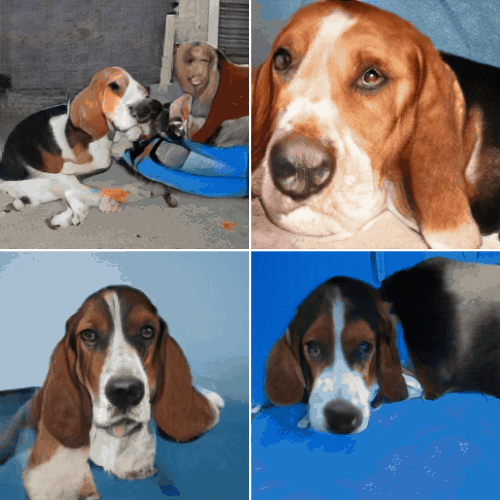}} \\

\multicolumn{1}{m{0.1cm}}{\rotatebox{90}{\textbf{sweatshirt}}} & \makecell*[c]{\includegraphics[width=0.26\linewidth]{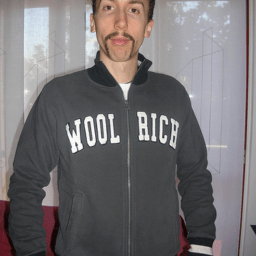}} & \makecell*[c]{\includegraphics[width=0.26\linewidth]{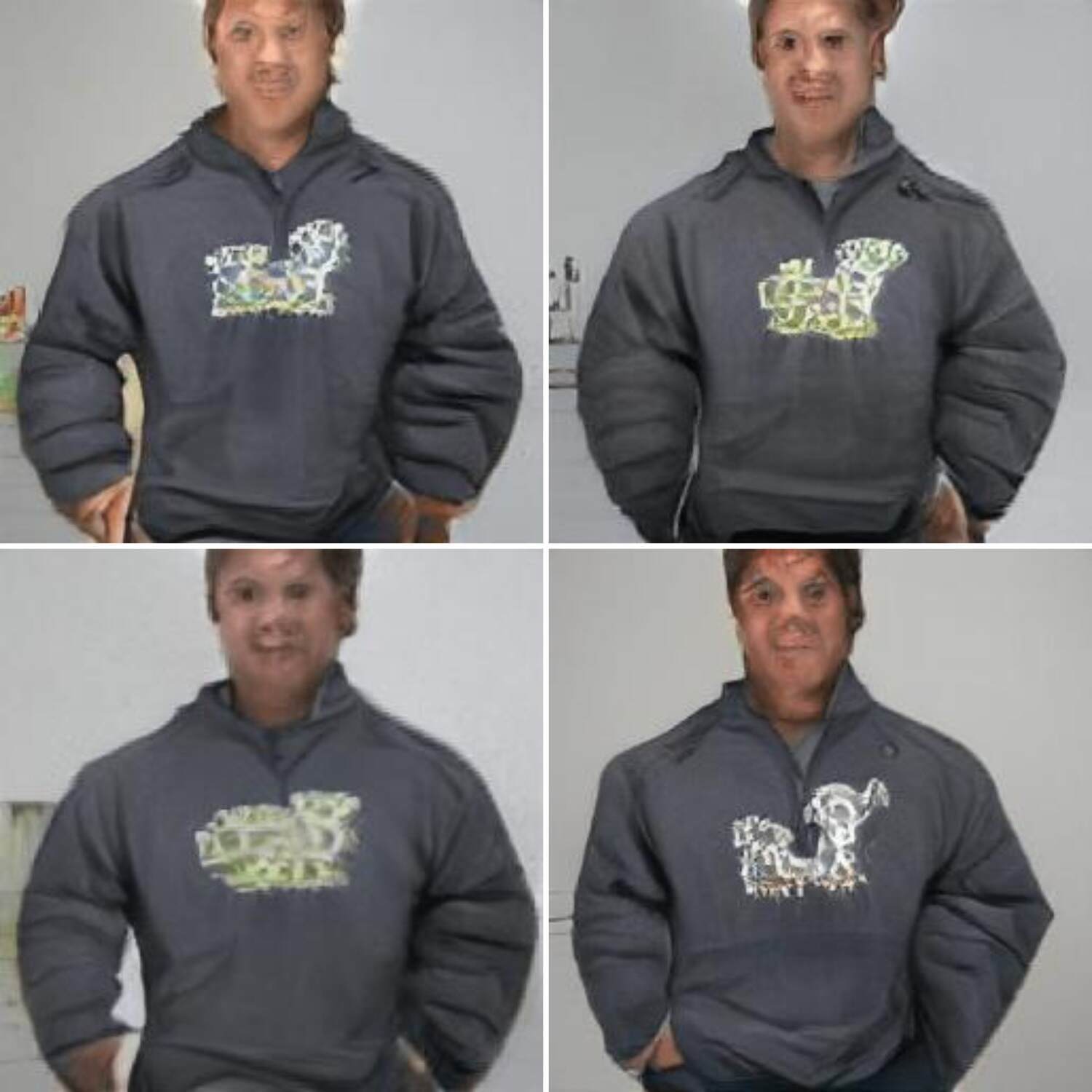}} & \makecell*[c]{\includegraphics[width=0.26\linewidth]{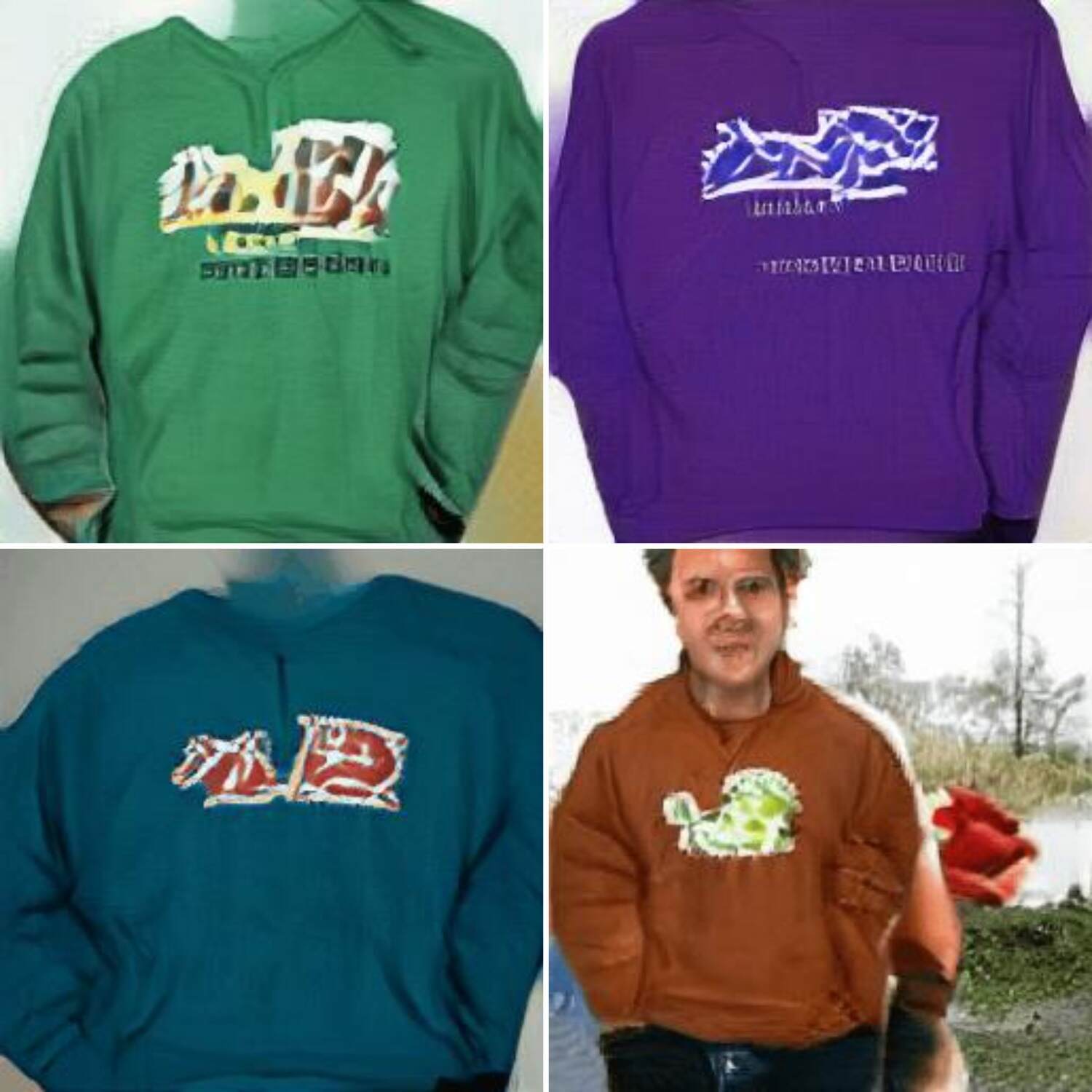}} \\

\multicolumn{1}{m{0.1cm}}{\rotatebox{90}{\textbf{radiator grille}}} & \makecell*[c]{\includegraphics[width=0.26\linewidth]{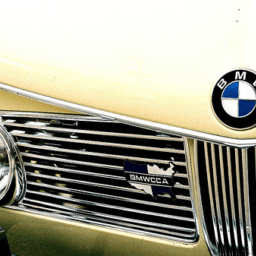}} & \makecell*[c]{\includegraphics[width=0.26\linewidth]{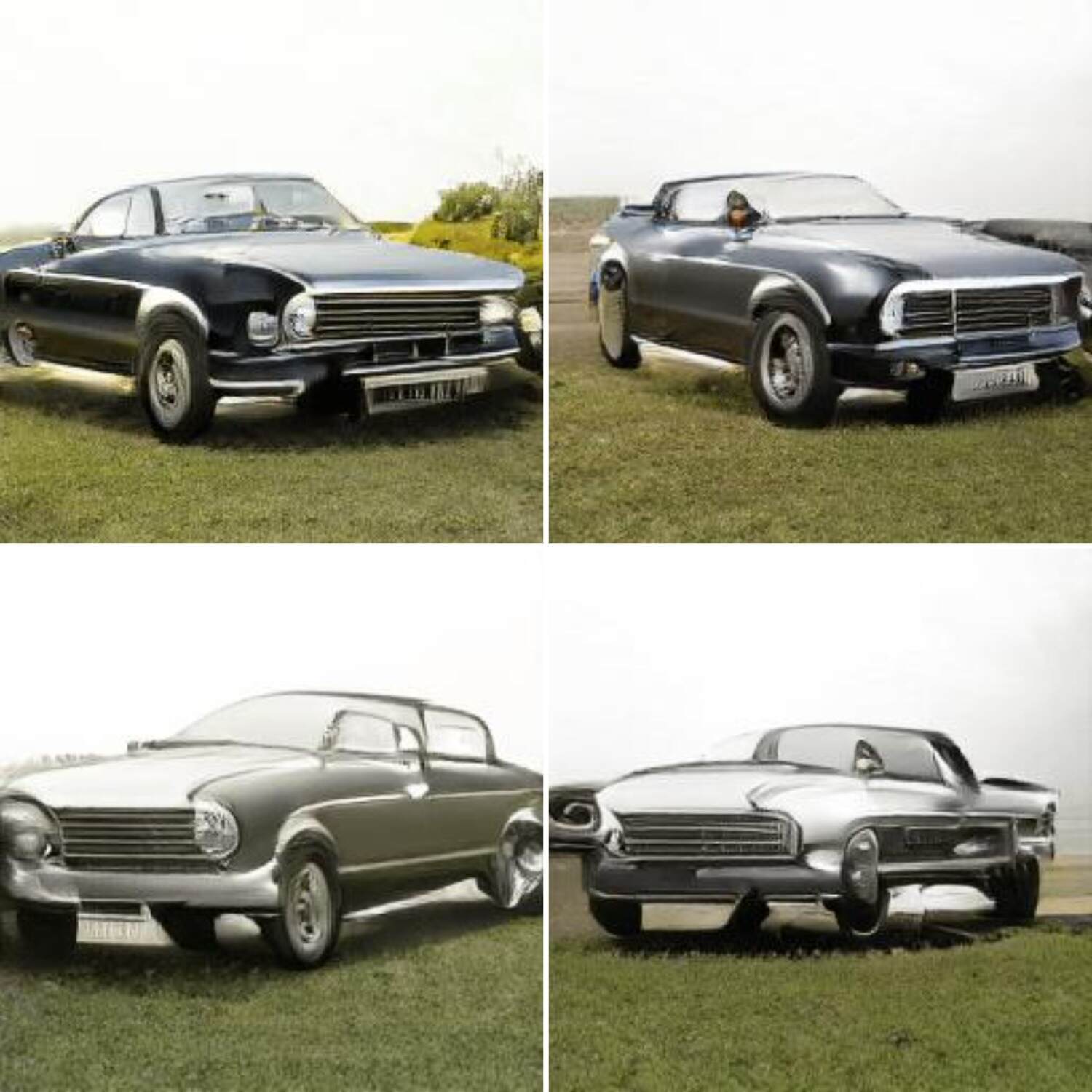}} & \makecell*[c]{\includegraphics[width=0.26\linewidth]{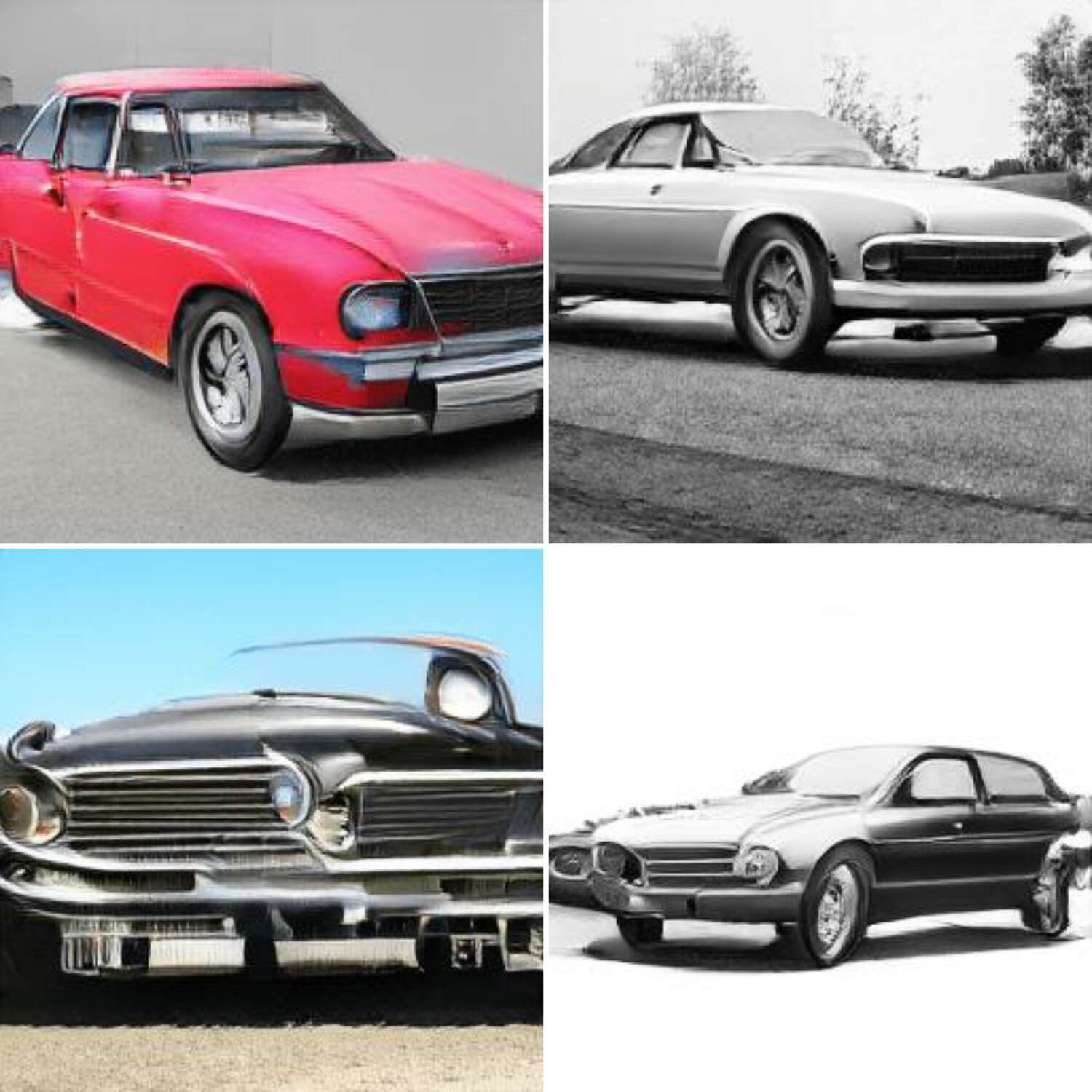}} \\

\multicolumn{1}{m{0.1cm}}{\rotatebox{90}{\textbf{pig}}} & \makecell*[c]{\includegraphics[width=0.26\linewidth]{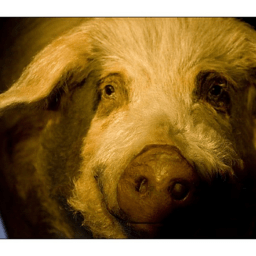}} & \makecell*[c]{\includegraphics[width=0.26\linewidth]{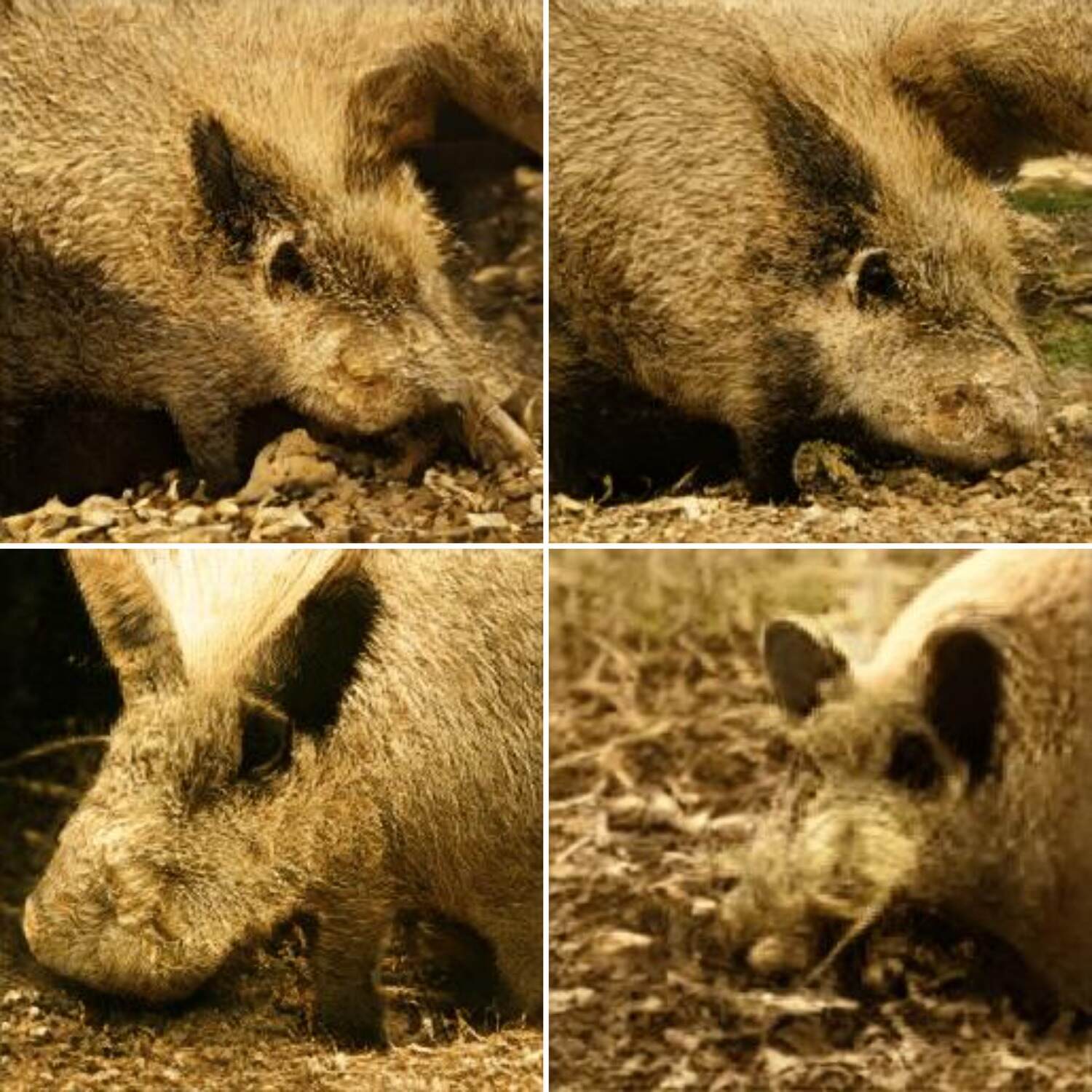}} & \makecell*[c]{\includegraphics[width=0.26\linewidth]{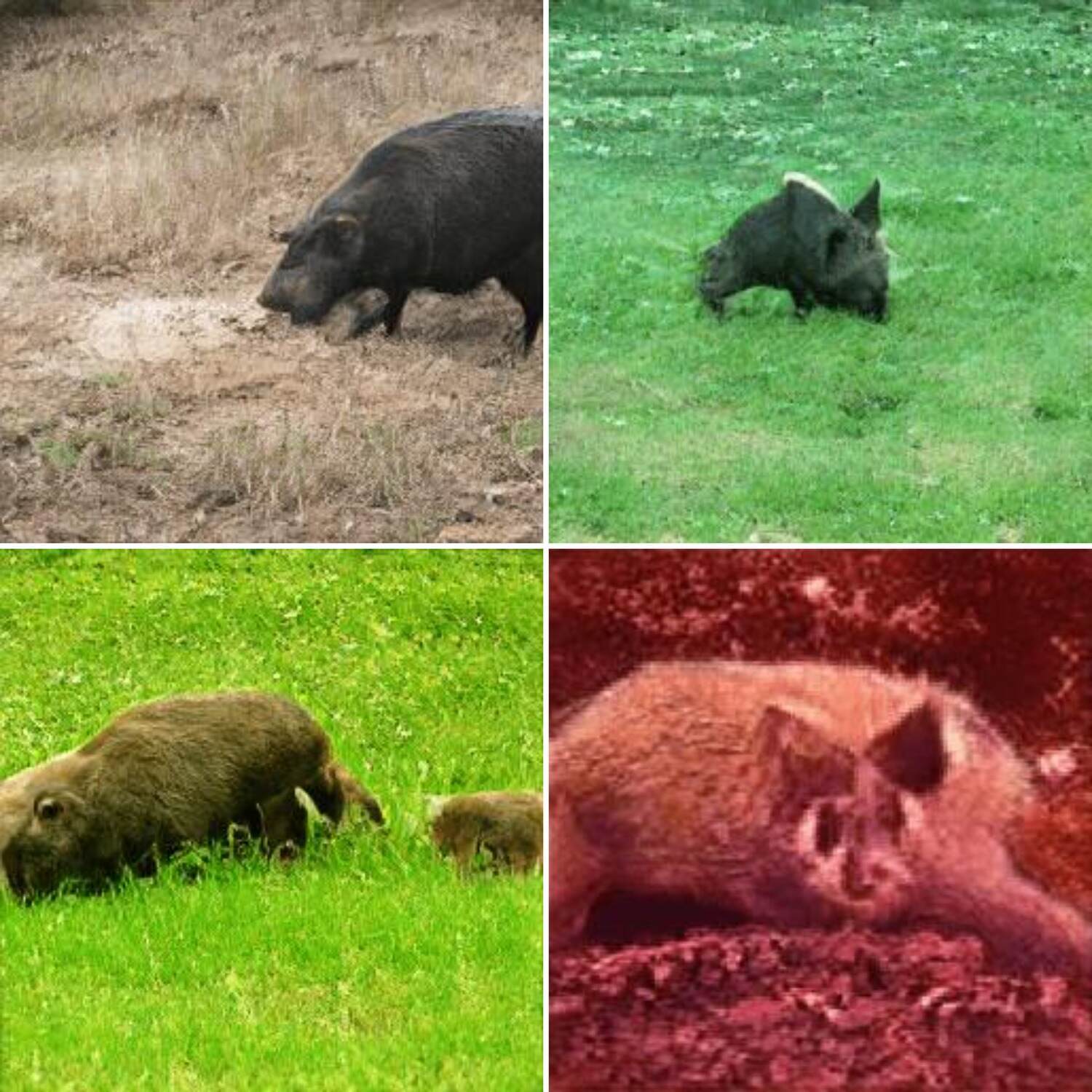}} \\

\multicolumn{1}{m{0.1cm}}{\rotatebox{90}{\textbf{Bedlington terrier}}} & \makecell*[c]{\includegraphics[width=0.26\linewidth]{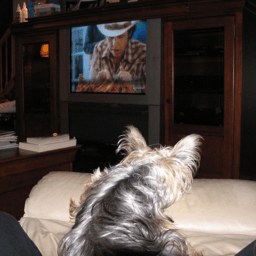}} & \makecell*[c]{\includegraphics[width=0.26\linewidth]{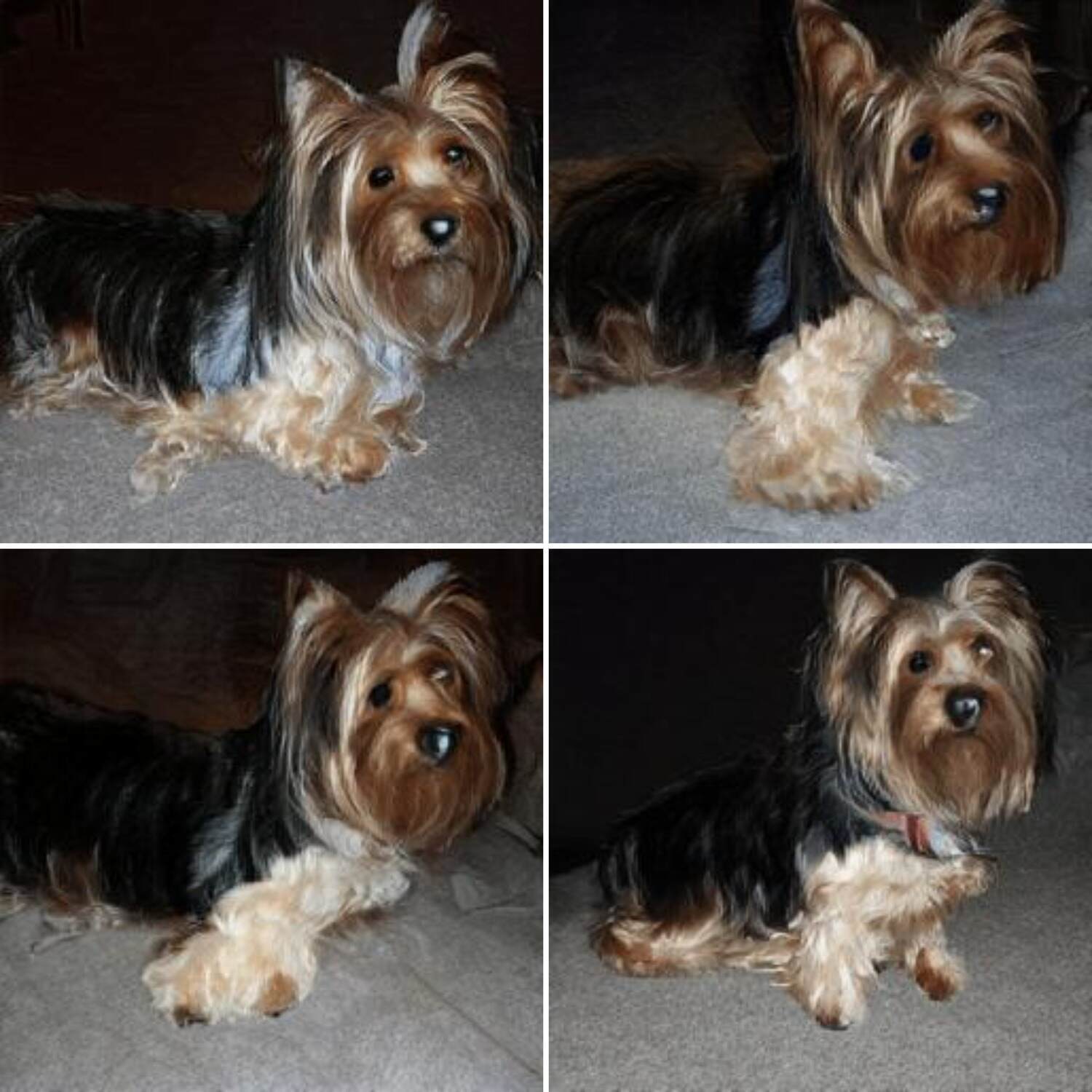}} & \makecell*[c]{\includegraphics[width=0.26\linewidth]{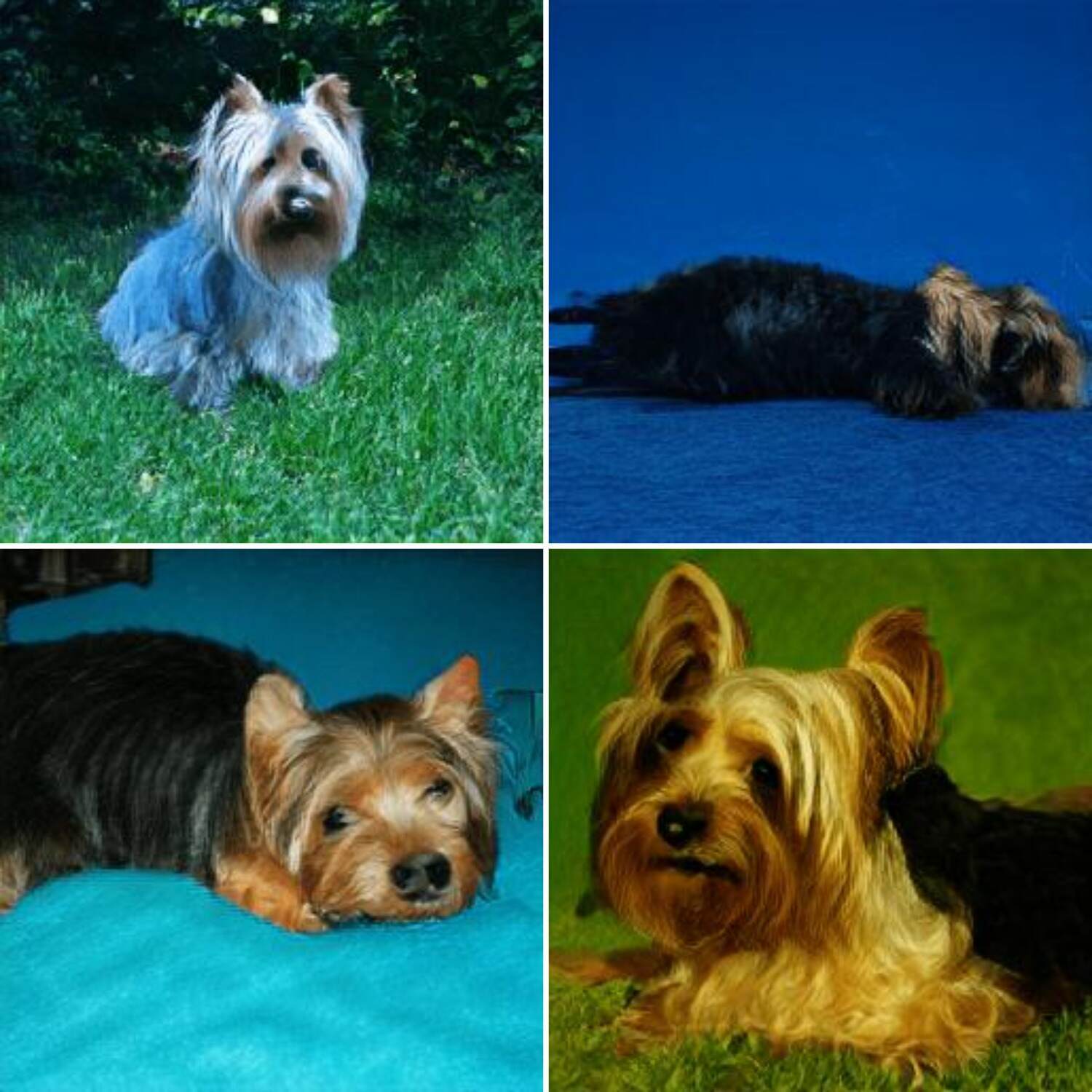}}
\end{tabular}
\end{center}
\end{table}

The \ac{GGL} first generates an image using the inferred ground-truth label to specify the image category, then fine-tunes the generated image based on the gradient information to make it as similar as possible to the true image. In Table~\ref{tab:ggl}, the model with the key-lock module generates samples with the same category as the true images. However, it can be observed that the appearances and spatial distributions are significantly different from the true image (see the w/KL column in Table~\ref{tab:ggl}). The key-lock module effectively prevents \ac{GGL} from inheriting the semantic and structural information of the true image. For instance, in the first row of the table, the samples generated from the \ac{GGL} model without the key-lock module display a black grouse with a similar green background color and orientation as the true input image. When the key-lock module is embedded into the model, \ac{GGL} can still generate images of black grouse using the same label. However, the color distribution and object orientation no longer match the true image. This is also true for the ``sweatshirt'' examples, where the model with the key-lock module generates four different colors of sweatshirts, none of which match the color (gray) in the true image. In terms of spatial information, significant variance can be observed across all examples when the proposed key-lock module is used. For instance, in the last column of Table~\ref{tab:ggl}, the generated images of ``basset hound'' exhibit various camera perspectives and distances of view fields. Consequently, it can be concluded that the generated images with the key-lock module are semantically different from the true image, proving that the true image information is effectively protected from the gradient.

\begin{table*}[!ht]
\setlength{\tabcolsep}{8pt}
\begin{center}
\caption{Quantitative comparison of \ac{DLG}, \ac{GRNN} and \ac{GGL} with/without key-lock module. The results are computed from the generated images and their associated true images.}
\label{tab:qc}
\begin{tabular}{c c c c >{\columncolor{blue!10}}c c >{\columncolor{red!10}}c c >{\columncolor{blue!10}}c c >{\columncolor{red!10}}c c >{\columncolor{blue!10}}c}
\hline
\multirow{2}{*}{\textbf{Method}} & \multirow{2}{*}{\textbf{Model}} & \multirow{2}{*}{\textbf{Dataset}} & \multicolumn{2}{c}{MSE$\downarrow$} & \multicolumn{2}{c}{PSNR$\uparrow$} & \multicolumn{2}{c}{LPIPS-V$\downarrow$} & \multicolumn{2}{c}{LPIPS-A$\downarrow$} & \multicolumn{2}{c}{SSIM$\uparrow$} \\
\cmidrule(r){4-5}\cmidrule(r){6-7}\cmidrule(r){8-9}\cmidrule(r){10-11}\cmidrule(r){12-13}
&&& w/o & w/ & w/o & w/ & w/o & w/ & w/o & w/ & w/o & w/ \\
\hline
\multirow{3}{*}{\textbf{DLG}} & \multirow{3}{*}{\makecell[c]{\emph{LeNet}\\(32*32)}}
& MNIST & 49.97 & 140.87 & 32.71 & 26.64 & 0.27 & 0.70 & 0.13 & 0.56 & 0.724 & -0.003 \\
&& C-10 & 29.62 & 98.76 & 41.14 & 28.20 & 0.13 & 0.48 & 0.08 & 0.32 & 0.724 & 0.006\\
&& C-100 & 25.30 & 101.44 & 39.85 & 28.09 & 0.08 & 0.49 & 0.04 & 0.33 & 0.859 & 0.009 \\
\hline
\multirow{9}{*}{\textbf{GRNN}} & \multirow{3}{*}{\makecell[c]{\emph{LeNet}\\(32*32)}}
& MNIST & 0.35 & 103.46 & 52.78 & 28.00 & 0.00 & 0.69 & 0.00 & 0.45 & 1.000 & 0.108 \\
&& C-10 & 0.78 & 92.12 & 49.29 & 28.52 & 0.00 & 0.49 & 0.00 & 0.26 & 1.000 & 0.101 \\
&& C-100 & 1.05 & 100.82 & 48.65 & 28.18 & 0.00 & 0.49 & 0.00 & 0.28 & 0.999 & 0.085 \\
\cline{3-13}
& \multirow{3}{*}{\makecell[c]{\emph{ResNet-20}\\(32*32)}}
& MNIST & 1.62 & 101.28 & 47.82 & 28.14 & 0.04 & 0.64 & 0.00 & 0.36 & 0.998 & 0.010\\
&& C-10 & 9.12 & 86.42 & 40.86 & 28.85 & 0.00 & 0.43 & 0.00 & 0.24 & 0.984 & 0.051\\
&& C-100 & 20.79 & 99.25 & 38.09 & 28.28 & 0.00 & 0.45 & 0.00 & 0.25 & 0.963 & 0.032 \\
\cline{3-13}
& \multirow{3}{*}{\makecell[c]{\emph{ResNet-18}\\(256*256)}}
& C-10 & 15.59 & 72.55 & 38.41 & 29.56 & 0.01 & 0.46 & 0.01 & 0.54 & 0.965 & 0.153 \\
&& C-100 & 34.63 & 85.33 & 34.13 & 28.97 & 0.03 & 0.48 & 0.02 & 0.55 & 0.917 & 0.159 \\
&& ILSVRC & 13.26 & 62.87 & 38.36 & 30.22 & 0.04 & 0.40 & 0.03 & 0.48 & 0.932 & 0.273 \\
\hline
\multirow{9}{*}{\textbf{IG}} & \multirow{3}{*}{\makecell[c]{\emph{LeNet}\\(32*32)}}
& MNIST & 15.59 & 48.66 & 36.22 & 31.27 & 0.05 & 0.20 & 0.02 & 0.23 & 0.890 & 0.459 \\
&& C-10 & 15.61 & 48.59 & 36.46 & 31.54 & 0.08 & 0.27 & 0.04 & 0.19 & 0.735 & 0.349 \\
&& C-100 & 15.80 & 49.72 & 36.35 & 31.38 & 0.08 & 0.24 & 0.04 & 0.20 & 0.776 & 0.386 \\
\cline{3-13}
& \multirow{3}{*}{\makecell[c]{\emph{ResNet-20}\\(32*32)}}
& MNIST & 51.20 & 58.90 & 31.06 & 30.44 & 0.24 & 0.28 & 0.20 & 0.19 & 0.262 & 0.253\\
&& C-10 & 34.60 & 45.06 & 33.07 & 31.74 & 0.19 & 0.29 & 0.11 & 0.16 & 0.298 & 0.150\\
&& C-100 & 40.84 & 45.07 & 32.15 & 31.70 & 0.23 & 0.28 & 0.14 & 0.15 & 0.238 & 0.187 \\
\cline{3-13}
& \multirow{3}{*}{\makecell[c]{\emph{ResNet-18}\\(256*256)}}
& C-10 & 34.81 & 53.15 & 33.20 & 30.92 & 0.27 & 0.34 & 0.22 & 0.32 & 0.497 & 0.635 \\
&& C-100 & 42.97 & 55.16 & 32.30 & 30.77 & 0.31 & 0.33 & 0.25 & 0.34 & 0.422 & 0.608 \\
&& ILSVRC & 61.74 & 86.10 & 30.32 & 28.81 & 0.43 & 0.57 & 0.37 & 0.58 & 0.170 & 0.258 \\
\hline
\textbf{GGL} & \makecell[c]{\emph{ResNet-18}\\(256*256)} & ILSVRC & 63.59 & 87.22 & 30.22 & 28.79 & 0.40 & 0.47 & 0.39 & 0.46 & 0.231 & 0.180 \\
\hline
\end{tabular}
\end{center}
\end{table*}

In order to quantitatively evaluate the proposed key-lock module against state-of-the-art gradient leakage methods, we employed four evaluation metrics, including \ac{MSE}, \ac{PSNR}, \ac{LPIPS} (using \emph{VGGNet} and \emph{AlexNet}), and \ac{SSIM}. In Table~\ref{tab:qc}, the results were calculated from the generated images and their corresponding true images. It is consistent that the generated images from the model with the proposed key-lock module have significantly lower similarity compared to those from the model without the key-lock module. This indicates that the key-lock module can effectively defend against gradient leakage attacks. For instance, \ac{DLG} with \emph{LeNet} on the MNIST dataset has an \ac{MSE} score of $49.97$, while the one with the key-lock module achieves $140.87$, which is $181.91\%$ higher. When \emph{LeNet} is embedded with a key-lock module, the \ac{MSE} score of \ac{GRNN} on the MNIST dataset increases by $29,460\%$. As for \ac{PSNR}, most models without the key-lock module scored between $35$ and $50$. However, when the key-lock module is integrated into the model, the \ac{PSNR} scores drop to approximately $28$. In terms of \ac{LPIPS} and \ac{SSIM}, the former metric focuses on the semantic similarity between two images, while the latter one emphasizes structural similarity. The results from all the \ac{DLG}, \ac{GRNN} and \ac{IG} perform significantly worse when the key-lock module is embedded into the model. Furthermore, unlike \ac{DLG}, \ac{GRNN} and \ac{IG}, which generate original true images from the gradient directly, the images generated by \ac{GGL} are similar to the true images in terms of semantics and structure, as shown in Table~\ref{tab:ggl}. We found that \ac{GGL} with the key-lock module can still generate images with rich semantic and structural information. Nevertheless, the key-lock module can reduce the reconstruction capability of \ac{GGL} by preventing the true image's information from leaking through the gradient. As a result, the \ac{GGL} without the key-lock module obtains $0.40$ and $0.39$ on \ac{LPIPS} using \emph{VGGNet} and \emph{AlexNet}, respectively, whereas the results increase to $0.47$ and $0.46$ when the key-lock module is present. Although the \ac{SSIM} score is only $0.231$ for standard \ac{GGL}, it is reduced to $0.180$ by the key-lock module. Based on the quantitative results in Table~\ref{tab:qc}, we can conclude that the proposed key-lock module is capable of effectively protecting private information from being leaked through the gradient.

\begin{figure*}[ht!]
\centering
    \subfigure{
    \centering
    \includegraphics[width=0.45\linewidth]{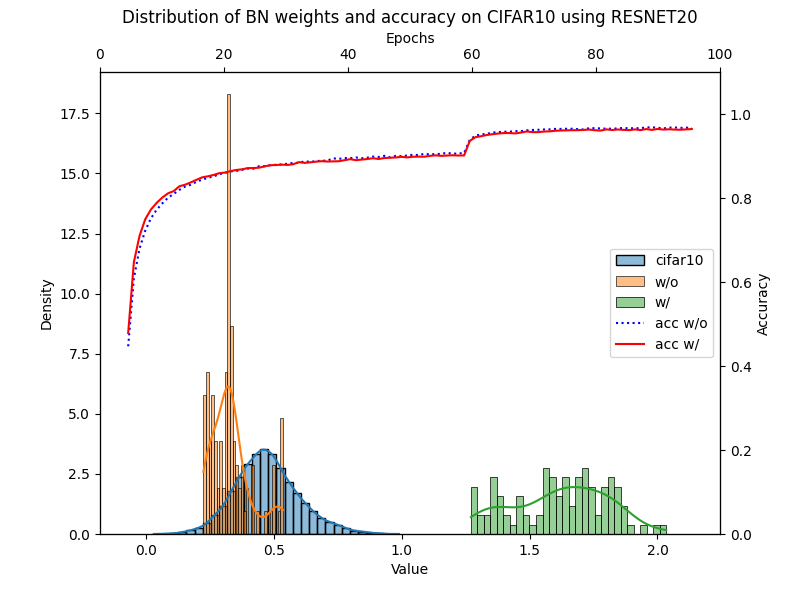}
    }
    \subfigure{
    \centering
    \includegraphics[width=0.45\linewidth]{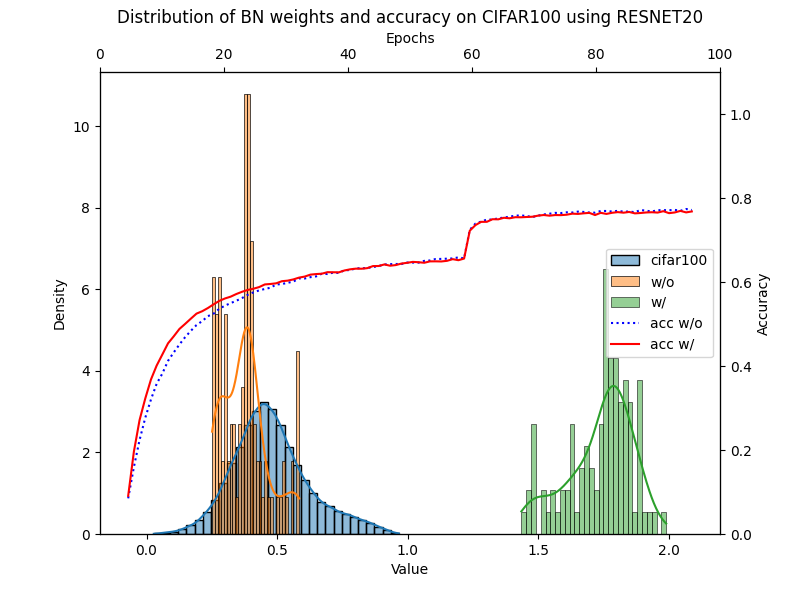}
    }

    \subfigure{
    \centering
    \includegraphics[width=0.45\linewidth]{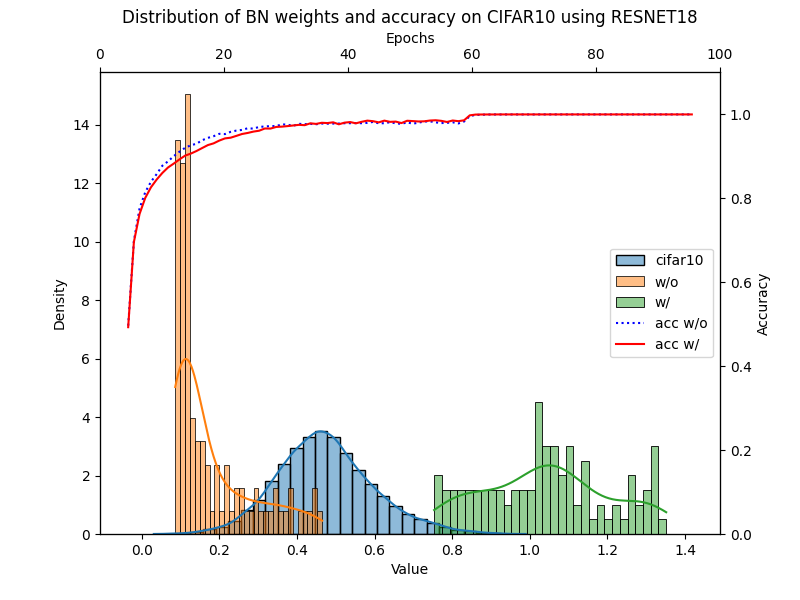}
    }
    \subfigure{
    \centering
    \includegraphics[width=0.45\linewidth]{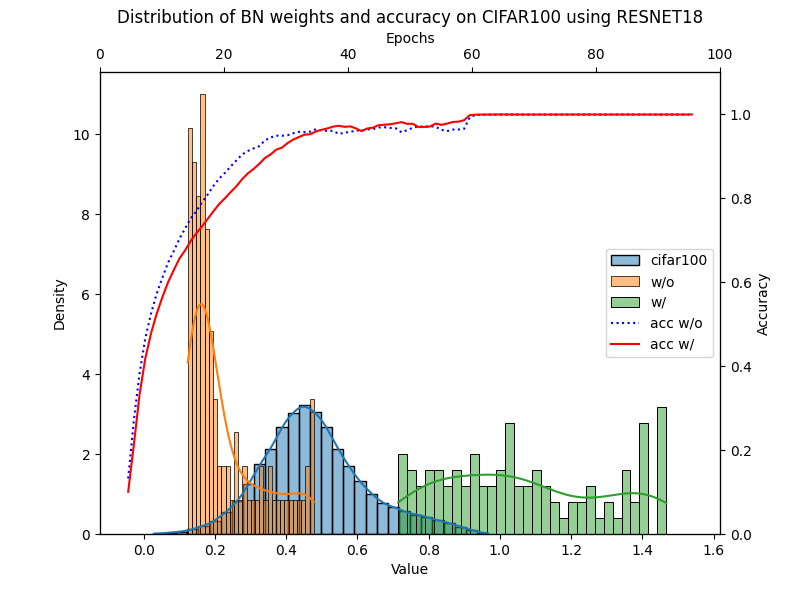}
    }
\caption{Distribution of dataset samples after applying BN scale and shift parameters, along with test accuracy on CIFAR-10/100 using ResNet-20 and ResNet-18. The key-lock module (green) alters the distributions compared to standard training (orange), while maintaining or improving accuracy (red vs. blue lines). Reference distributions (blue) correspond to the original dataset.}
\label{fig:bnd}
\end{figure*}

\subsection{Distributional Characteristics for BN Parameters}
\label{sec:dc}
To examine the representational impact of the key-lock module, we analyze the output distributions produced by the \ac{BN} layer after applying the scale and shift parameters to the dataset. Rather than inspecting the parameter values directly, we apply the \ac{BN} transformation to the entire dataset and visualize the resulting outputs. This allows us to observe how the internal data representation is altered by the presence of the key-lock mechanism. 

Fig.~\ref{fig:bnd} presents a comparative view of these distributions for \emph{ResNet-20} and \emph{ResNet-18} architectures on the CIFAR-10 and CIFAR-100 datasets. The distributions are shown in three forms: reference distributions obtained from clean models (in blue), distributions (in orange) by applying scale and shift parameters from models trained without the key-lock module to the dataset, and those from models trained with the key-lock module (in green). In both architectures, the distributions from models trained without the key-lock module closely resemble the reference distributions, indicating that conventional training produces stable and consistent internal representations aligned with the data distribution. However, a notable deviation is observed when the key-lock module is introduced. In the key-lock-enabled models, the \ac{BN} outputs become significantly more dispersed and irregular. This shift is especially evident in the \emph{ResNet-20} experiments, where the green distributions exhibit greater asymmetry and deviation from the reference baseline. These distributional changes stem from the fact that scale and shift parameters in the key-lock framework are not learned directly from the training data but are instead generated via a transformation of a private key input. This decouples the \ac{BN} behavior from the dataset statistics and introduces randomness specific to each client. Despite these disruptions in internal representation, the overlaid accuracy curves in Fig.~\ref{fig:bnd} show that the models maintain competitive or improved performance. For example, in \emph{ResNet-18} on CIFAR-100, the model trained with the key-lock module consistently outperforms its baseline counterpart throughout the training process. This confirms that the key-lock module does not degrade, and may even enhance, the discriminative power of the network. The changes in distribution observed here have important implications for privacy. Gradient leakage attacks typically rely on the consistent alignment of internal feature statistics across training rounds. By distorting these distributions through a private-key-based transformation, the key-lock module weakens the correlation between the gradient and the original input, thus reducing the effectiveness of reconstruction attacks. Unlike traditional noise-injection or perturbation methods, this defense emerges naturally from the architectural design, without introducing additional randomness into the gradient itself. In conclusion, the experimental results in Fig.~\ref{fig:bnd} provide concrete evidence that the key-lock module alters \ac{BN}-induced data transformations in a meaningful way. These changes disrupt gradient predictability, enhance model privacy, and preserve classification performance, thereby reinforcing the theoretical claims regarding the module’s effectiveness in defending against gradient-based leakage.

\subsection{Sharing Strategies of the Key-Lock Module}
\label{sec:sdss}

\begin{table*}[!ht]
\setlength{\tabcolsep}{3pt}
\begin{center}
\caption{Comparison of image reconstruction using \ac{DLG}, \ac{GRNN} and \ac{GGL} with key-lock module. The first case is to share only the private key sequence with the server. Then ``Lock'' means not sharing the key sequence, but sharing the gradient of the lock layer. ``Both'' is to share both the key sequence and gradient of the lock layer.}
\label{tab:cdss}
\begin{tabular}{c c | c c c | c c c | c}
\hline
\multicolumn{2}{c|}{\textbf{Model}} & \multicolumn{3}{c|}{\makecell[c]{\emph{LeNet}\\(32*32)}} & \multicolumn{4}{c}{\makecell[c]{\emph{ResNet-18}\\(256*256)}} \\
\hline
\multicolumn{2}{c|}{\textbf{Method}} & \multicolumn{3}{c|}{GRNN} & \multicolumn{3}{c|}{GRNN} & GGL \\
\hline
& & MNIST & C-10 & C-100 & C-10 & C-100 & ILSVRC & ILSVRC \\
\multirow{12}{*}{\makecell[c]{\textbf{Shared}\\\textbf{Info.}}} & \textbf{Key} & \makecell*[c]{\includegraphics[width=0.08\linewidth]{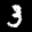}} & \makecell*[c]{\includegraphics[width=0.08\linewidth]{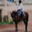}} & \makecell*[c]{\includegraphics[width=0.08\linewidth]{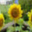}} & \makecell*[c]{\includegraphics[width=0.08\linewidth]{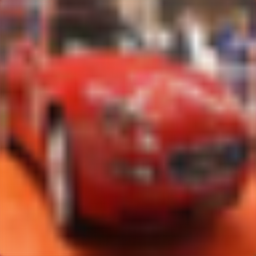}} & \makecell*[c]{\includegraphics[width=0.08\linewidth]{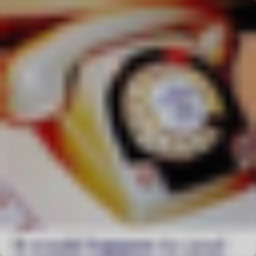}} & \makecell*[c]{\includegraphics[width=0.08\linewidth]{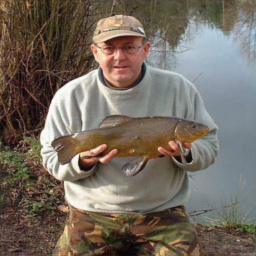}} & \makecell*[c]{\includegraphics[width=0.08\linewidth]{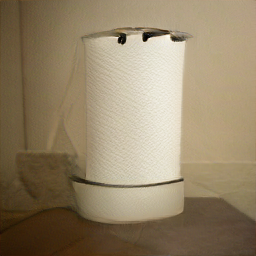}}\\

& \textbf{Lock} & \makecell*[c]{\includegraphics[width=0.08\linewidth]{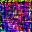}} & \makecell*[c]{\includegraphics[width=0.08\linewidth]{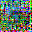}} & \makecell*[c]{\includegraphics[width=0.08\linewidth]{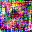}} & \makecell*[c]{\includegraphics[width=0.08\linewidth]{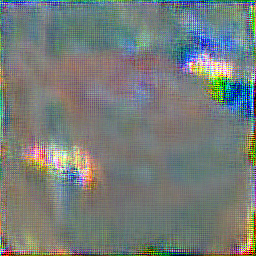}} & \makecell*[c]{\includegraphics[width=0.08\linewidth]{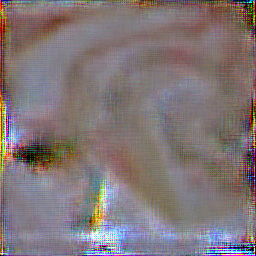}} & \makecell*[c]{\includegraphics[width=0.08\linewidth]{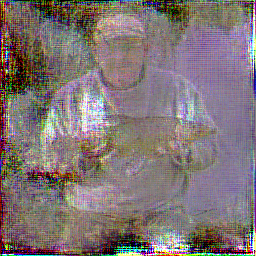}} & \makecell*[c]{\includegraphics[width=0.08\linewidth]{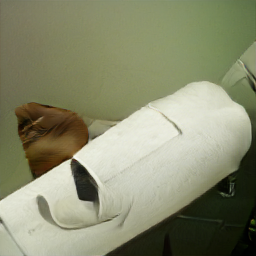}}\\

& \textbf{Both} & \makecell*[c]{\includegraphics[width=0.08\linewidth]{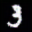}} & \makecell*[c]{\includegraphics[width=0.08\linewidth]{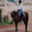}} & \makecell*[c]{\includegraphics[width=0.08\linewidth]{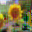}} & \makecell*[c]{\includegraphics[width=0.08\linewidth]{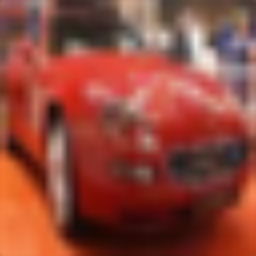}} & \makecell*[c]{\includegraphics[width=0.08\linewidth]{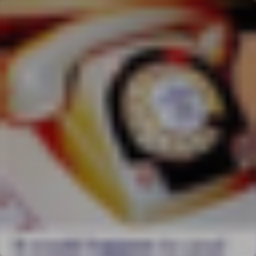}} & \makecell*[c]{\includegraphics[width=0.08\linewidth]{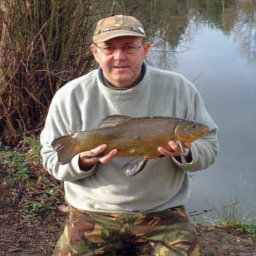}} & \makecell*[c]{\includegraphics[width=0.08\linewidth]{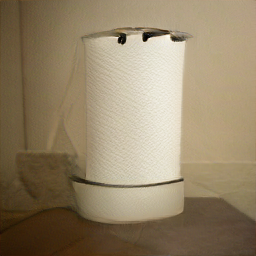}}\\

\hline
\multicolumn{2}{c|}{\textbf{True}} & \makecell*[c]{\includegraphics[width=0.08\linewidth]{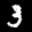}} & \makecell*[c]{\includegraphics[width=0.08\linewidth]{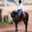}} & \makecell*[c]{\includegraphics[width=0.08\linewidth]{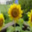}} & \makecell*[c]{\includegraphics[width=0.08\linewidth]{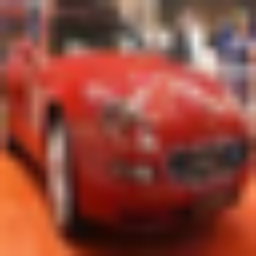}} & \makecell*[c]{\includegraphics[width=0.08\linewidth]{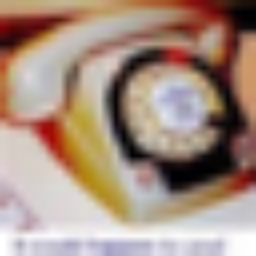}} & \makecell*[c]{\includegraphics[width=0.08\linewidth]{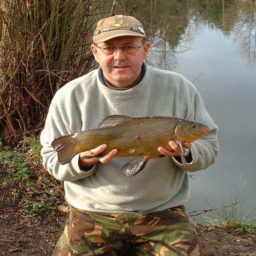}} & \makecell*[c]{\includegraphics[width=0.08\linewidth]{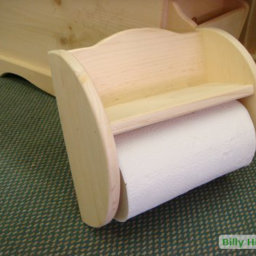}}\\
\hline
\end{tabular}
\end{center}
\end{table*}

Our proposed module comprises two components to defend against gradient leakage in gradient-sharing-based training systems, such as \ac{FL}. We theoretically demonstrated in Section~\ref{sec:TAGL} that the key-lock module can effectively prevent the inheritance of input data information from being embedded into the gradient of the model. In this section, we experimentally evaluate the influence of the key sequence and lock layer separately for gradient leakage defense. We employ three information-sharing strategies for \ac{GRNN} and \ac{GGL} with different backbones trained on various datasets. \ac{DLG} failed to recover any true images across all scenarios, and the potential reasons have already been discussed in Section~\ref{sec:dp}. What is more, \ac{IG} does not performance as good as \ac{GRNN}. As a result, no examples from \ac{DLG} and \ac{IG} are presented in Table~\ref{tab:cdss}. Factors such as network complexity and image resolution are considered. The private key can be used to calculate the gradient of the lock layer, but the reverse is not possible (see Eqns.~\ref{equ:26} \& \ref{equ:27}). Consequently, the gradient in Eqn.~\ref{equ:gradient_loss_hat_uu} can be inferred theoretically. When the image resolution is fixed at $32*32$, \ac{GRNN} can successfully perform gradient leakage attacks by providing a private key only, while it fails to recover any meaningful visual content by providing the parameters of the key-lock module only. We can conclude that although both the key and the parameters of the key-lock module contribute to gradient protection, the private key information plays a more significant role in defending against gradient leakage attacks. To mitigate any potential risk, we ensure that both the key and the lock layer's gradients are retained on the client side. This dual protection mechanism reinforces the security against gradient leakage.
\section{Conclusion}
\label{sec:cc}
In this paper, we provided comprehensive theoretical analysis on gradient leakage, based on which we proposed a gradient leakage defense method, called FedKL, for the \ac{FL} system. The proposed key-lock module prevents the input data information from leaking through the gradient during the training stage. Importantly, we theoretically demonstrated the efficacy of the key-lock module in defending against gradient leakage attacks. Extensive empirical studies were conducted with three state-of-the-art attack methods. All the quantitative and qualitative comparison results indicate that by incorporating the proposed key-lock module into the model, reconstructing input images from the publicly shared gradient is no longer feasible. Additionally, we discussed the influence of the key-lock module on the model in terms of classification accuracy. The implementation of our proposed FedKL is made publicly available to ensure consistent replication and facilitate further comparison for researchers in related fields.

\bibliographystyle{IEEEtran}
\bibliography{ref}

\end{document}